\documentclass[]{fairmeta}
\usepackage{amssymb}            %
\usepackage{mathtools}          %
\usepackage{mathrsfs}           %
\usepackage{graphicx}           %
\usepackage{subcaption}         %
\usepackage[space]{grffile}     %
\usepackage{url}                %
\usepackage{lipsum}             %
\usepackage[ruled,vlined]{algorithm2e}
\usepackage{amsthm}
\usepackage{thmtools, thm-restate}
\usepackage{wrapfig}
\usepackage{tabularx}
\usepackage{multirow}
\usepackage{graphicx}
\makeatletter
\def\thm@space@setup{%
  \thm@preskip=2pt
  \thm@postskip=2pt %
}
\makeatother

\declaretheorem{proposition}

\theoremstyle{plain}

\theoremstyle{definition}

\theoremstyle{remark}

\usepackage{amsmath,amsfonts,bm}

\def\eqref#1{equation~\ref{#1}}

\def\1{\bm{1}}

\DeclareMathAlphabet{\mathsfit}{\encodingdefault}{\sfdefault}{m}{sl}
\SetMathAlphabet{\mathsfit}{bold}{\encodingdefault}{\sfdefault}{bx}{n}

\def\gA{{\mathcal{A}}}

\def\gS{{\mathcal{S}}}

\def\gZ{{\mathcal{Z}}}

\newcommand{\E}{\mathbb{E}}

\newcommand{\R}{\mathbb{R}}

\newcommand{\LOLA}{\texttt{LoLA}}
\newcommand{\RELA}{\texttt{ReLA}}
\usepackage{amsmath}
\usepackage{hyperref}
\usepackage{url}
\usepackage{booktabs}
\usepackage{graphicx}
\usepackage{subcaption}
\usepackage{bbm}
\usepackage{soul}
\usepackage{makecell}

\title{Fast Adaptation with Behavioral Foundation Models}

\author[1,*]{Harshit Sikchi}
\author[2]{Andrea Tirinzoni}
\author[2]{Ahmed Touati}
\author[2]{Yingchen Xu}
\author[2]{Anssi Kanervisto}
\author[3]{Scott Niekum}
\author[1]{Amy Zhang}
\author[2]{Alessandro Lazaric}
\author[2]{Matteo Pirotta}

\affiliation[1]{The University of Texas at Austin}
\affiliation[2]{FAIR at Meta}
\affiliation[3]{UMass Amherst}

\contribution[*]{Work done during an internship at FAIR, Meta}

\abstract{
Unsupervised zero-shot reinforcement learning (RL) has emerged as a powerful paradigm for pretraining behavioral foundation models (BFMs), enabling agents to solve a wide range of downstream tasks specified via reward functions in a zero-shot fashion, i.e., without additional test-time learning or planning. This is achieved by learning self-supervised task embeddings alongside corresponding near-optimal behaviors and incorporating an inference procedure to directly retrieve the latent task embedding and associated policy for any given reward function. Despite promising results, zero-shot policies are often suboptimal due to errors induced by the unsupervised training process, the embedding, and the inference procedure. In this paper, we focus on devising \emph{fast adaptation} strategies to improve the zero-shot performance of BFMs in few steps of online interaction with the environment, while avoiding any performance drop during the adaptation process. Notably, we demonstrate that existing BFMs learn a set of skills containing more performant policies than those identified by their inference procedure, making them well-suited for fast adaptation. Motivated by this observation, we propose both actor-critic and actor-only fast adaptation strategies that search in the low-dimensional task-embedding space of the pre-trained BFM to rapidly improve the performance of its zero-shot policies on any downstream task. Notably, our approach mitigates the initial “unlearning” phase commonly observed when fine-tuning pre-trained RL models. We evaluate our fast adaptation strategies on top of four state-of-the-art zero-shot RL methods in multiple navigation and locomotion domains. Our results show that they achieve 10-40\% improvement over their zero-shot performance in a few tens of episodes, outperforming existing baselines.
}

\date{\today}
\correspondence{Harshit Sikchi at \email{hsikchi@utexas.edu}}

\begin{document}

\maketitle
\section{Introduction}

Unsupervised (or self-supervised) pre-training has emerged as one of the key ingredients behind the recent breakthroughs in computer vision and language modeling~\citep[e.g.,][]{radford2019language,devlin2019bert,touvron2023llama,caron2021emerging}. This technique allows utilizing large datasets of unlabeled data samples to learn generalizable representations that can be later fine-tuned for various downstream applications~\citep{zhai2023sigmoid,brown2020language,driess2023palm}. For instance, language models are pre-trained on internet-scale data with a next-token prediction objective and later fine-tuned for desired applications using high-quality examples. How to transpose this approach to reinforcement learning (RL) to train agents that can efficiently solve sequential decision-making problems is an open research question of paramount importance. Going beyond the tabula-rasa paradigm of classic RL requires an unsupervised pre-training objective and the ability to efficiently fine-tune or adapt pre-trained representations for downstream tasks. Recent developments in unsupervised RL propose various objectives to learn a repertoire of skills on top of reward-free data from the environment \citep{gregor2016variational,wu2018laplacian,hansen2019fast,liu2021aps,eysenbach2018diversity,zahavy2022discovering,park2023metra}. Some of these methods are named ``zero-shot'', in the sense that they additionally provide a procedure to infer a performant policy for any given task specified by reward functions~\citep{zeroshot,park2024foundation,agarwal2024proto, cetin2024finer}, demonstrations~\citep{pirotta2024fast, tirinzoni2025zeroshot}, or videos/language~\citep{sikchi2024rl}. The resulting pre-trained agents are commonly referred to as Behavioral Foundation Models \citep[BFMs,][]{pirotta2024fast,tirinzoni2025zeroshot}.

\begin{figure}[t]
    \centering
    \begin{minipage}{0.60\textwidth}
        \centering
        \includegraphics[width=\textwidth]{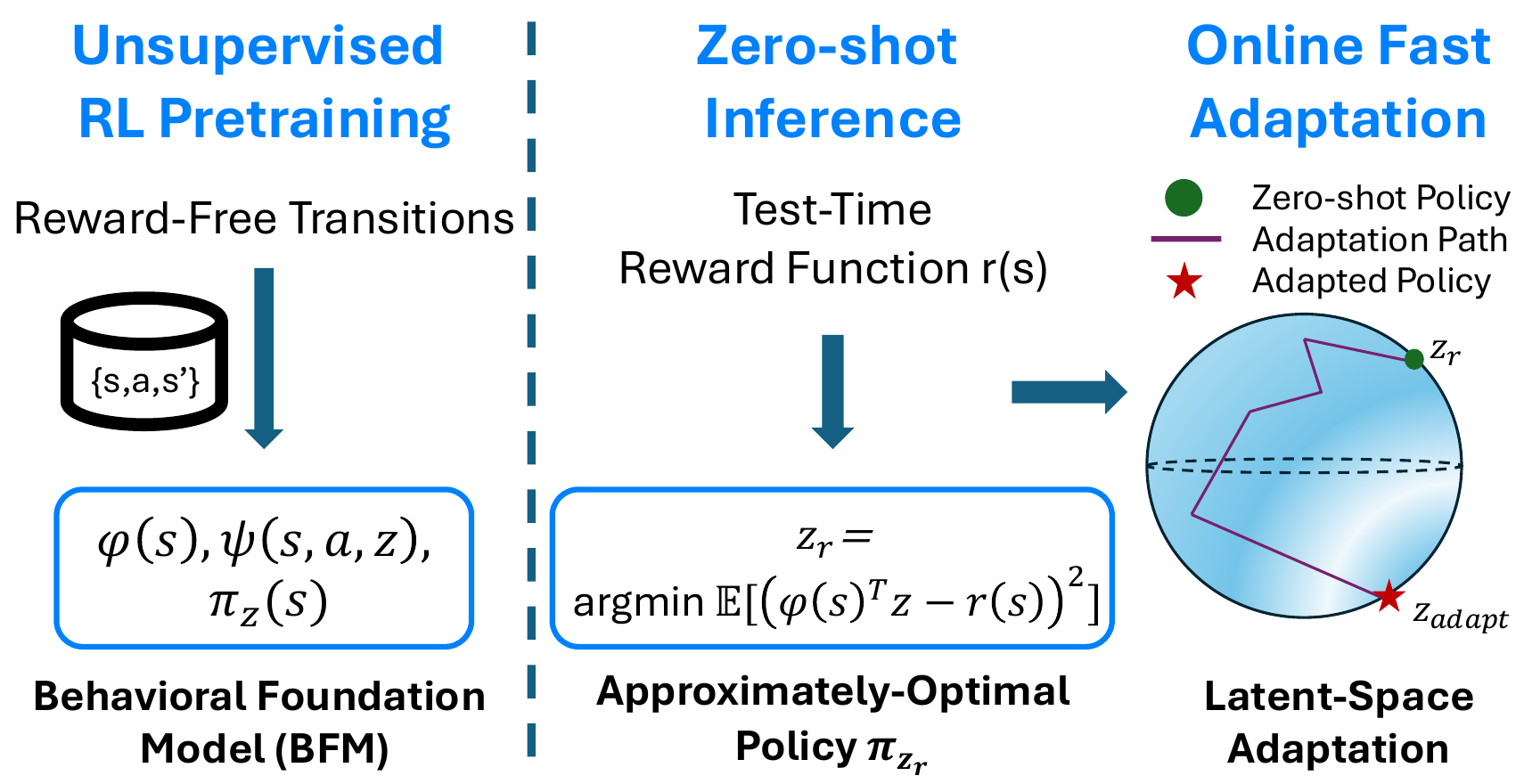}
    \end{minipage}%
    \hspace{5mm}
    \begin{minipage}{0.35\textwidth}
        \centering
        \includegraphics[width=\textwidth]{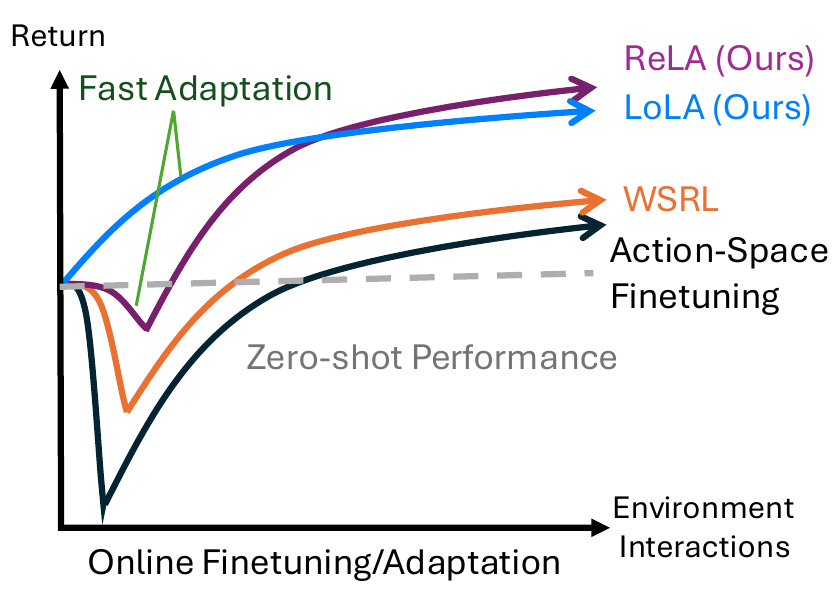}
    \end{minipage}
    \caption{\textbf{Overview of our method}: Unsupervised zero-shot RL methods provide us with an initial policy $\pi_{z_r}$; we propose a way to leverage the latent space of learned policies as well as the pre-trained critic to rapidly adapt and improve $\pi_{z_r}$ on few task-specific environment interactions. Right: Illustrative summary of our results.}
    \label{fig:sidebyside}
    \vspace{-0.5cm}
\end{figure}

Zero-shot methods commonly pre-train two components: (1) a \emph{state representation} $\varphi : \mathcal{S} \rightarrow \mathbb{R}^d$ that embeds state observations $s \in \mathcal{S}$ into a d-dimensional vector $\varphi(s)$, and (2) a space $\{\pi_z\}$ of \emph{policies} parameterized by a latent vector $z\in\mathbb{R}^d$. The representation $\varphi$ defines the set of all linear reward functions in $\varphi$, i.e., $\tilde{r}_z(s) = \varphi(s)^T z$ for all $z\in\mathbb{R}^d$, which in turn is used as a \emph{self-supervised} objective function for the policy space: for each $z\in\mathbb{R}^d$, the policy $\pi_z$ is trained to be approximately optimal for the reward $\tilde{r}_z$. Given a reward function $r(s)$ at test time, a zero-shot policy $\pi_{z_r}$ can be obtained by projecting $r$ onto the pre-trained state features $\varphi$ through linear regression on top of the training data, hence approximating $r(s) \simeq \varphi(s)^T z_r$.

Although this inference method has proven effective in producing reasonable policies, it suffers from two main limitations yielding sub-optimal performance. First, the embedding $\varphi$ is learned using unsupervised losses encoding inductive biases\footnote{For instance, some methods rely on low-rank assumptions in the policy dynamics~\citep{fballreward,agarwal2024proto}, while others focus only on goal-reaching behaviors~\citep{park2024foundation}} that may not be suitable for the downstream tasks of interest. As a result, the projection of the reward function onto $\varphi$ may remove crucial aspects of the task specification thus preventing from finding the optimal policy for the original reward. In an extreme scenario, if a reward function lies in the orthogonal subspace of the features' linear span, its projection onto these features becomes zero, making it uninformative.
Second, BFMs are typically trained on task-agnostic datasets that may have poor coverage of the rewarding states relevant to the specific task. This limitation can result in zero-shot inference failing to accurately represent these states and ultimately hinder the learning of a good policy.

While the suboptimality of unsupervised pre-training of large models is somewhat unavoidable, it is natural to wonder whether these limitations can be overcome once a downstream reward function is given and the agent has online access to the environment. In this paper we focus on devising \textit{fast adaptation} strategies that improve zero-shot performance of BFMs \textbf{1)} \textit{rapidly}, i.e., in a handful of online episodes, and \textbf{2)} \textit{monotonically}, i.e., avoiding any performance drop during the adaptation process. This motivates the main question of this work:
\begin{center}
  \textit{Does the policy space of a pre-trained BFM contain better behaviors than those returned by zero-shot inference? If so, can we retrieve them with few task-specific environment interactions?} 
\end{center}
To address this question, we propose searching over the latent space $\gZ$ using a limited number of online task-specific interactions with the environment (cf. Figure~\ref{fig:sidebyside}). We introduce two algorithms that leverage the latent space and pre-trained components from BFMs to enable \emph{fast adaptation} of their zero-shot policies: (1) Residual Latent Adaptation (ReLA), an off-policy actor-critic approach that trains a small \emph{residual critic} to compensate for the reward projection errors, and Lookahead Latent Adaptation (LoLA), a hybrid actor-only approach that combines on-policy optimization while bootstrapping the frozen critic from the pre-trained BFMs. %

We perform an extensive empirical evaluation on 5 domains with a total of 64 tasks spanning low-dimensional and high-dimensional problems with increasing complexity, including a whole-body humanoid control problem with a wide range of 45 diverse reward-based behaviors. We demonstrate the effectiveness of our proposed algorithms on four state-of-the-art BFMs: FB \citep{fballreward}, HILP \citep{park2024foundation}, PSM \citep{agarwal2024proto} and FB-CPR~\citep{tirinzoni2025zeroshot}. In particular, we answer the above question affirmatively: our fast adaptation algorithms achieve 10-40\% improvement over the BFMs zero-shot performance in only a few episodes (Figure~\ref{fig:main_summary_experiment} and~\ref{fig:qualitative-adaptation}), while outperforming existing baselines. Moreover, we show that \LOLA{} avoids any initial drop of performance, a phenomenon commonly observed by numerous prior works on fine-tuning RL policies~\citep{nair2020awac,nakamoto2023cal,luo2023finetuning,zhou2024efficient}.

\section{Preliminaries}
\begin{figure}
    \centering
    \includegraphics[width=1.0\linewidth]{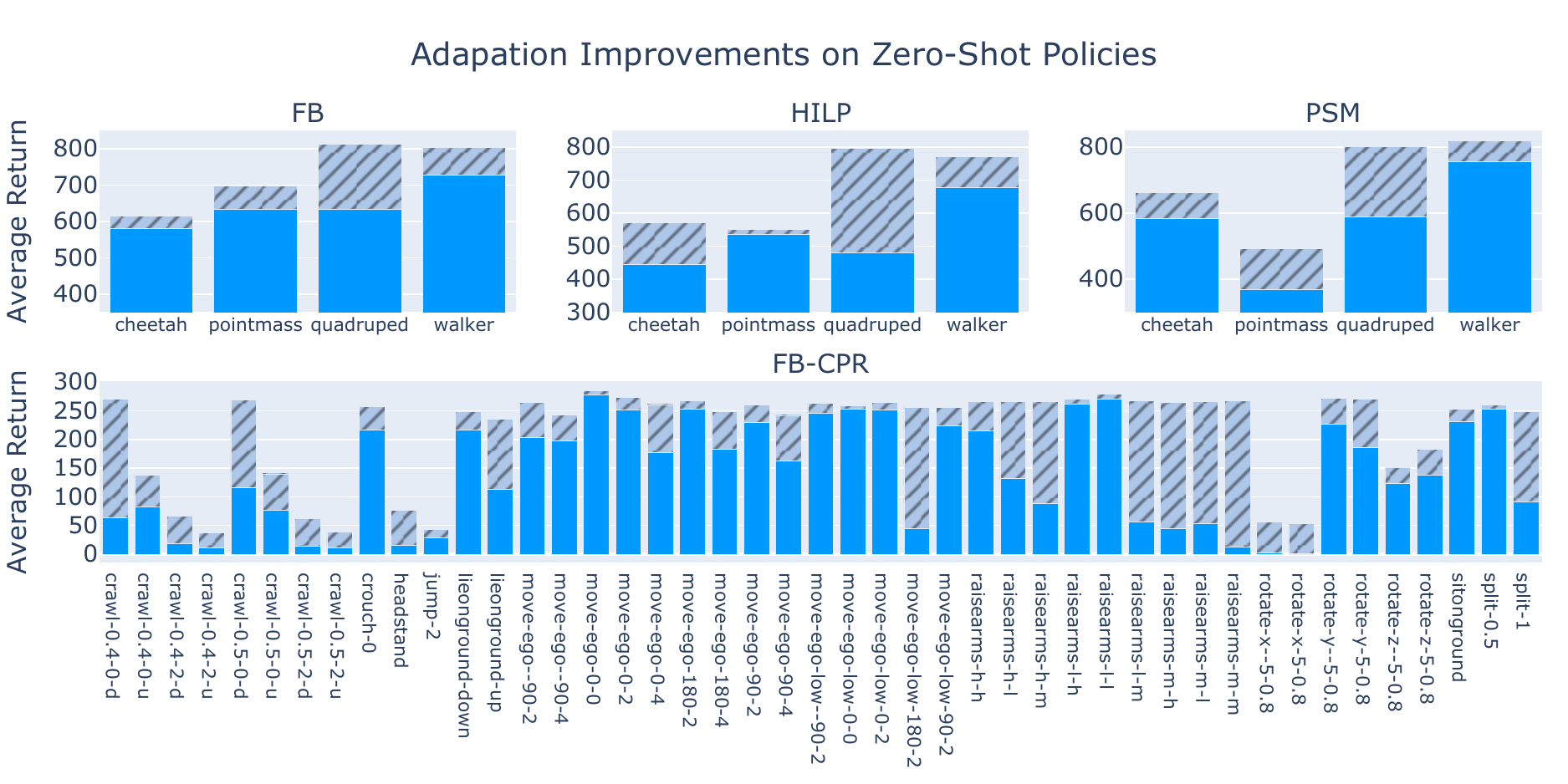}
    \caption{Performance comparison of zero-shot policy vs adapted policy in the BFM's latent space after 200 episodes. The shaded region shows the improvement of the adapted policies averaged across tasks.}
    \label{fig:main_summary_experiment}
    \vspace{-0.3cm}
\end{figure}

\textbf{Markov decision process.} We consider a reward-free Markov decision process (MDP)~\citep{puterman2014markov,sutton2018reinforcement} which is defined as a tuple $\mathcal{M}=(\gS,\gA,P,d_0, \gamma)$, where $\gS$ and $\gA$ respectively denote the state and action spaces, $P$ denotes the transition kernel with $P(s'|s,a)$ indicating the probability of transitioning from $s$ to $s'$ by taking action $a$, $d_0$ denotes the initial state distribution and $\gamma \in (0,1)$ specifies the discount factor. A policy $\pi$ is a function $\pi: \gS \rightarrow \Delta(\gA)$ mapping a state s to probabilities of action in $\gA$. We denote by $\text{Pr}(\cdot \mid s, a, \pi)$ and $\mathbb{E}[\cdot \mid s, a, \pi]$ the
probability and expectation operators under state-action sequences $(s_t, a_t)_{t \geq 0}$ starting at $(s, a)$ and
following policy $\pi$ with $s_t \sim P( \cdot \mid s_{t-1}, a_{t-1})$ and $a_t \sim \pi(\cdot \mid s_t)$. Given any reward function $r: \gS \rightarrow \R$, the Q-function of $\pi$ for $r$ is $Q^\pi_r(s, a) := \sum_{t \geq 0} \gamma^{t}\mathbb{E}[r(s_{t+1}) \mid s, a, \pi]$.

\textbf{Successor measures and features.} The \textit{successor measure}~\citep{dayan1993improving, blier2021learning} of state-action $(s, a)$ under a policy $\pi$ is the
(discounted) distribution of future states obtained by taking action $a$ in state $s$ and following policy $\pi$
thereafter:
\begin{equation}
    M^\pi(X \mid s, a) := \sum_{t \geq 0} \gamma^t \text{Pr}(s_{t+1} \in X \mid s, a, \pi) \quad \forall X \subset \gS.
\end{equation}
Importantly, successor measures disentangle the dynamics of the MDP and the reward function: for
any reward $r$ and policy $\pi$, the Q-function can be expressed linearly as $Q^\pi_r = M^\pi r$. 

Given a feature map $\varphi: \gS \rightarrow \R^d$ that embeds states into a $d$-dimensional space, the \textit{successor features}~\citep{barreto2017successor} is the expected discounted sum of features:
\begin{equation}
    \psi^\pi(s, a) := \sum_{t\geq 0} \gamma^t \mathbb{E}[\varphi(s_{t+1}) \mid s, a, \pi].
\end{equation}
Successor features and measures are related: by definition, $ \psi^\pi(s, a) = \int_{s} M^\pi(\mathrm{d}s' \mid s, a) \varphi(s')$. For any reward function in the linear span of $\varphi$, \textit{i.e.}, $r(s) =  \omega^\top \varphi(s)$ where $\omega$ is a weight vector in $\R^d$, the Q-function can be expressed compactly as $Q^\pi_r(s, a) = \omega^\top \psi^\pi(s, a)$.

\begin{figure}
    \centering
    \includegraphics[width=1.0\linewidth]{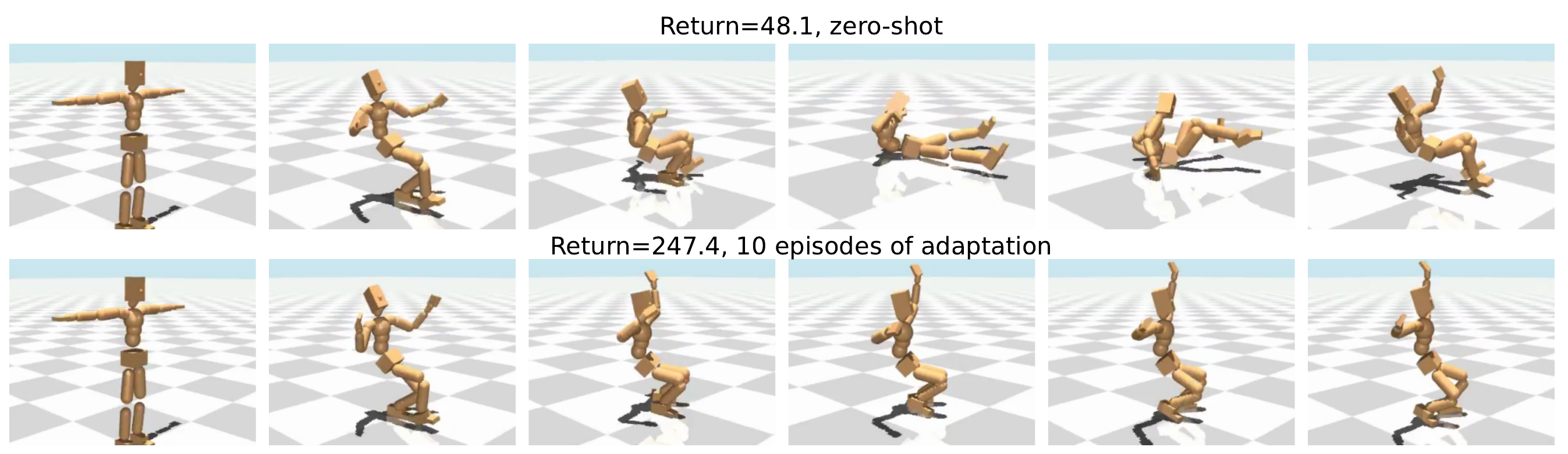}
    \caption{Qualitative difference in behaviors in 10 episodes of adaptation in HumEnv environment for the task move-ego-low-180-2 with our method LoLA.}
    \label{fig:qualitative-adaptation}
\end{figure}

\textbf{Behavioral foundation models.} A \textit{behavioral foundation
model}, for a given MDP, is an agent that can be trained in an unsupervised fashion using reward-free transitions and yet can produce approximately
optimal policies for a large class of reward functions $r$ specified at test time, without performing additional learning or planning. In this work, we focus on zero-shot RL agents that are based on successor features and forward-backward representations.  

\textit{Universal successor features} (USFs)~\citep{borsa2018universal} provide a generic framework for zero-shot RL. Given a feature map $\varphi$, USFs learn the successor features of a particular family of policies $\pi_z$ parameterized by latent variables $z \in \gZ \subset \R^d$:
\begin{equation}
    \psi(s, a, z) = \mathbb{E}[\sum_{t\geq 0} \gamma^t \varphi(s_{t+1}) \mid s, a, \pi_z ], \quad \pi_z(s) = \arg\max_{a} \psi(s, a, z)^\top z.
\end{equation}
At test time, once a reward function $r$ is specified, a reward-maximizing policy is inferred by performing a linear regression of $r$ onto the features $\varphi$. In particular, we estimate $z_r= \arg\min_{z}\mathbb{E}_{s \sim \rho}[(r(s) - \varphi(s)^\top z)^2] = \mathbb{E}_{s \sim \rho}[\varphi(s) \varphi(s)^\top]^{-1} \mathbb{E}_{s \sim \rho}[\varphi(s) r(s)]$ where $\rho$ is some dataset distribution over states. Then we return the pre-trained policy $\pi_{z_r}$. This policy is guaranteed to be optimal if the reward is in the linear span of the features $\varphi$~\citep{borsa2018universal}. Although USF is a generic framework, it requires specifying a training criterion to learn the basic features $\varphi$. \citet{zeroshot} compare several choices of unsupervised representation learning objectives across various empirical problems. In this work, we focus on two recent state-of-the-art feature learning methods for zero-shot RL: \textit{Hilbert representations} (HILP)~\citep{park2024foundation} and \textit{proto successor measures} (PSM)~\citep{agarwal2024proto}. HILP constructs features $\varphi$ such that the distance $\| \varphi(s) - \varphi(s')\|$ between a state pair $(s, s')$ encodes the optimal value function of reaching the state $s'$ starting at $s$. PSM proposes to build the features $\varphi$ by learning an \textit{affine decomposition} of the successor measure for a discrete codebook of policies, \textit{i.e.}, $M^{\pi_u}(\mathrm{d}s'\mid s, a) / \rho(\mathrm{d}s') \approx \phi(s, a)^\top\varphi(s') w(u)+ b(s, a, s')$, where $\phi, w$ and $b$ are vector-valued functions and where $\pi_u$ is a deterministic policy that outputs an action in state $s$ as a realization of the uniform distribution, determined by the random seed $u$.

\textit{Forward-backward representations} (FB)~\citep{fballreward} provide an alternative framework for zero-shot RL. Unlike USFs which use two separate criteria to learn features and their successor features, FB avoid the state featurization step and employ a single objective to learn a finite-rank decomposition of the successor measure for various policies. Namely, FB pre-train two representations $F: \gS \times \gA \times \gZ \rightarrow \R^d$ and $B: \gS \rightarrow \R^d$  such that:
\begin{equation}
    F(s, a, z)^\top B(s') \rho(\mathrm{d}s') \approx M^{\pi_z}(\mathrm{d}s'\mid s, a), \quad \pi_z(s) = \arg\max_{a}F(s, a, z)^\top z.
\end{equation}
FB representations are related to USFs, as $F(s, a, z)$ represents the successor features of $\mathbb{E}_{s \sim \rho}[B(s)B(s)^\top]^{-1} B(s)$~\citep{zeroshot}. In the sequel, to standardize the notations with the USFs, we will denote $\psi(s, a, z)=F(s, a, z)$ and $\varphi(s) = \mathbb{E}_{s \sim \rho}[B(s)B(s)^\top]^{-1} B(s)$.

\textit{Forward-Backward representations with Conditional Policy Regularization} (FB-CPR)~\citep{tirinzoni2025zeroshot} is an online variant of FB that grounds the unsupervised policy learning toward imitating observation-only unlabeled behaviors.

\section{Fast Adaptation for Behavioral Foundation Models} \label{sec:methods}

In this section, we introduce our two approaches for fast adaptation of pre-trained BFMs: an off-policy actor-critic algorithm (Section~\ref{sec:rela}), and a hybrid on-policy actor-only algorithm (Section~\ref{sec:lola}).

\subsection{ReLA: Residual Latent Adaptation}
\label{sec:rela}
Given a reward function $r$, ReLA begins with the latent variable $z = z_r$ inferred by the zero-shot procedure and uses an off-policy actor-critic approach to gradually update $z$ towards better performance. The overall algorithm uses a standard online training procedure, interleaving between critic and actor updates (as described below), while gathering reward-labeled transitions in a replay buffer $\mathcal{D}_{\text{online}}$ through online interactions with the environment.

\textbf{Residual critic learning.} Instead of training a critic from scratch to model the Q-function of the policy $\pi_z$ currently being learned for the reward $r$, ReLA uses a residual critic to correct for the reward projection error. This is made possible by the following decomposition:
\begin{align}
    Q^{\pi_z}_r(s, a) & = Q^{\pi_z}_{\varphi^\top z_r}(s, a) + Q^{\pi_z}_{r - \varphi^\top z_r}(s, a) \nonumber\\
    & = \psi(s, a, z)^\top z_r + Q^{\pi_z}_{r - \varphi^\top z_r}(s, a)
    \label{eq:residual_decomposition}
\end{align}
where the last equality holds because $\psi$ is pre-trained to estimate the successor features of $\varphi$ and the projected reward $\varphi^\top z_r$ lies in the span of $\varphi$. Consequently, ReLA considers a network $Q^{\text{residual}}(s, a; \theta)$ parametrized by weights $\theta$  and trains it via off-policy TD learning so that  $\psi(s, a,z_r)^\top z_r + Q^{\text{residual}}(s, a; \theta)$ approximates the Q-function $Q^{\pi_z}_r(s, a)$, while keeping the \textit{base Q-function} $\psi(s, a,z_r)^\top z_r$ frozen. In practice, we shall use much smaller networks for the residual critic than for the pre-trained successor features, with the main intuition being that we only need to compensate for some projection error. For a more in-depth treatment of the Q-function decomposition we refer the readers to Appendix~\ref{ap:residual_motivation}.

\textbf{Latent actor update.} ReLA updates the latent variable $z$ using standard policy-gradient ascent, with the key difference being that the gradient is computed only with respect to $z$, while keeping the pre-trained actor parameters fixed,
\begin{equation}
    \label{eq:latent_policy_update}
    \nabla_z \mathbb{E}_{s \sim \mathcal{D}_{\text{online}}}[ \psi(s, \pi_z(s),z_r)^\top z_r + Q^{\text{residual}}(s, \pi_z(s); \theta) ],
\end{equation}
The main advantage over optimizing the whole actor network is that we only need to search in a low-dimensional space (in practice, $z$ has in the order of hundreds of components, while the actor network of a BFM has in the order of millions of parameters).

\subsection{LoLA: Lookahead Latent Adaptation}
\label{sec:lola}

Although ReLA can take advantage of off-policy data collected in the replay buffer, it requires learning an additional residual network. Therefore, ReLA demands a certain budget of transitions and updates to mitigate the distribution shift issue~\citep{luo2023finetuning} when learning the Q-function, which may impede improvements during the very early stages of adaptation. On the other hand, a purely on-policy approach
will require rolling out entire trajectories under the current policy to estimate Monte Carlo returns $\sum_{t=0}^T \gamma^t r_{t}$ (where $T$ is the episode length), and thus incur many environment interactions in the process. Alternatively, we propose \textbf{Lo}okahead \textbf{L}atent \textbf{A}daptation algorithm (LoLA) that uses fixed-horizon on-policy rollouts with a frozen terminal value function obtained from the BFM. LoLA parameterizes a policy over the latent space as a normal distribution $\pi_{\mu, \sigma}=\mathcal{N}(\mu,\sigma)$ with trainable mean $\mu$ (initialized with $\mu=z_r$), and fixed diagonal covariance $\sigma$. The pre-trained successor features from BFM are used compute the estimate of a terminal value-function, thus estimating the n-step lookahead return of policy $\pi_z$ starting from state $s_0$ as $R^n(s_0, z) = \sum_{t=0}^{n-1} \gamma^t r(s_{t+1}) + \gamma^{n} \psi(s_{n+1}, \pi_z(s_{n+1}), z)^\top z_r$.

Moreover, to further improve learning, LoLA  incorporates the variance reduction strategy of leave-one-out baseline~\citep{kool2019buy}. This baseline has recently been shown to be empirically effective for fine-tuning large language models~\citep{ahmadian2024back}. This leads to the following final gradient estimate\footnote{In practice we work with z normalized on hypersphere using projected gradient descent}:
\begin{equation}
\resizebox{0.97\textwidth}{!}{
$\mathbb{E}_{s_0 \sim \nu}\left[\frac{1}{k} \sum_{i=1}^k\left( R\bigl(s_0,z_i\bigr)
\;-\;
\frac{1}{k-1} \sum_{\substack{j=1 \\ j \neq i}}^k
R\bigl(s_0,z_j\bigr)\right)\,\nabla_{\mu} \log \pi_{\mu, \sigma}\bigl(z_i\bigr) \right]
~~
\text{for}
~~
z_1, \dots, z_k \sim \pi_{\mu, \sigma}(\cdot)\,$
}
\label{eq:lola_loss}
\end{equation}
where $s_0$ is sampled from the distribution $\nu$ defined as mixture between the environment's initial distribution $d_0$ and the online replay buffer distribution $\mathcal{D}_{\text{online}}$. For each sampled starting state $s_0$, we sample $k$ latent variables $\{ z_i\}_{i \in [k]}\sim \pi_{\mu, \sigma}$ and generate $k$ trajectories of length $n$ $(s^{(i)}_0, a^{(i)}_0, s^{(i)}_1,  a^{(i)}_1, \ldots, s^{(i)}_n)$ by following the policy $\pi_{z_i}$. Computing the gradient requires the ability to reset of any state in support of distribution $\nu$, which includes the states encountered during online adaptation.

\section{Experimental Results}
The goal of our experiments is to study how well latent policy adaptation works on top of existing BFMs. %
We perform several ablations to understand the efficacy of the proposed methods and evaluate our design choices. Precisely, 
\emph{1)} Can we find better policies by online latent policy adaptation compared to the zero-shot policies? Or, equivalently, is the latent policy space easy to search over? 
\emph{2)} How important is to leverage BFMs properties (e.g., Q-function estimate, zero-shot policy initialization)? 
\emph{3)} What are the critical limitations of the zero-shot inference process?

\textbf{Experimental setup.}
We investigate these questions by leveraging 4 different BFMs: FB, HILP, PSM and FB-CPR. 
While FB, HILP and PSM are trained offline, FB-CPR learns through online environmental interactions and it is regularized towards expert trajectories.
We consider four environments from the DeepMind Control suite~\citep{tassa2018deepmind} and train the BFMs on an exploratory dataset obtained from ExoRL~\citep{exorl}.\footnote{We consider the dataset collected by running RND~\citep{DBLP:conf/iclr/BurdaESK19}.} Further, we leverage the FB-CPR model released by~\citet{tirinzoni2025zeroshot} for the \texttt{HumEnv} environment, a high-dimensional humanoid agent. Overall, we consider 7 tasks for \texttt{Pointmass}, 4 for \texttt{Cheetah}, 4 for \texttt{Quadruped}, 4 for \texttt{Walker}, and 45 tasks for \texttt{HumEnv}. Detailed information about the pre-training phase can be found in Appendix~\ref{ap:experiments}.

\textbf{Protocol and baselines.} While the paper focuses on adaptation in the latent policy space, we also investigate the common class of approaches for fine-tuning in action space (i.e., updating all policy network parameters) with zero-shot initialization~\citep{nair2020awac,nakamoto2023cal}.  In particular, we consider a TD3-based algorithm~\citep{fujimoto2018addressing} that train a critic from scratch and an actor initialized using the zero-shot policy (\textsc{TD3 (I)}).\footnote{We also tested vanilla RLOO~\citep{kool2019buy} but did not get good results and decided not to report it.} Since collecting a few on-policy trajectories before starting updating the critic and the actor proved to be effective for offline to online adaptation, a strategy called Warm-Start RL (\textsc{WSRL})~\citep{zhou2024efficient}, we additionally consider this component for action-based algorithms. We further ablate several design choices (e.g., zero-shot initialization, bootstrapped critic) in Section~\ref{subsec:ablation} and, in Appendix~\ref{app:additional.exp} we report variations of our algorithms that operate by directly updating the parameters of the policy. See Table~\ref{tab:main.algorithms} for a complete list of algorithm variations.

We use a comparable architecture and hyperparameter search for all algorithms. For each BFM, we report the performance on the set of hyperparameters that performed best across all tasks and domains. We train all the online adaptation algorithms for $300$ episodes and we use $5$ seeds for each experiment. Evaluation is done by averaging results over $50$ episodes.  We also use TD3 as base off-policy algorithm for implementing \RELA{}. When using residual critic we use a small 2-layers MLP with hidden dimension $64$, while when we learn the critic from scratch we use a 2-layers MLP with hidden dimension $1024$. The policy has always the same size as the BFM policy. We provide further implementation details in Appendix~\ref{ap:experiments}.

\subsection{Do ReLA and LoLA enable fast adaptation?}

\begin{figure}
    \centering
    \includegraphics[width=1.0\linewidth]{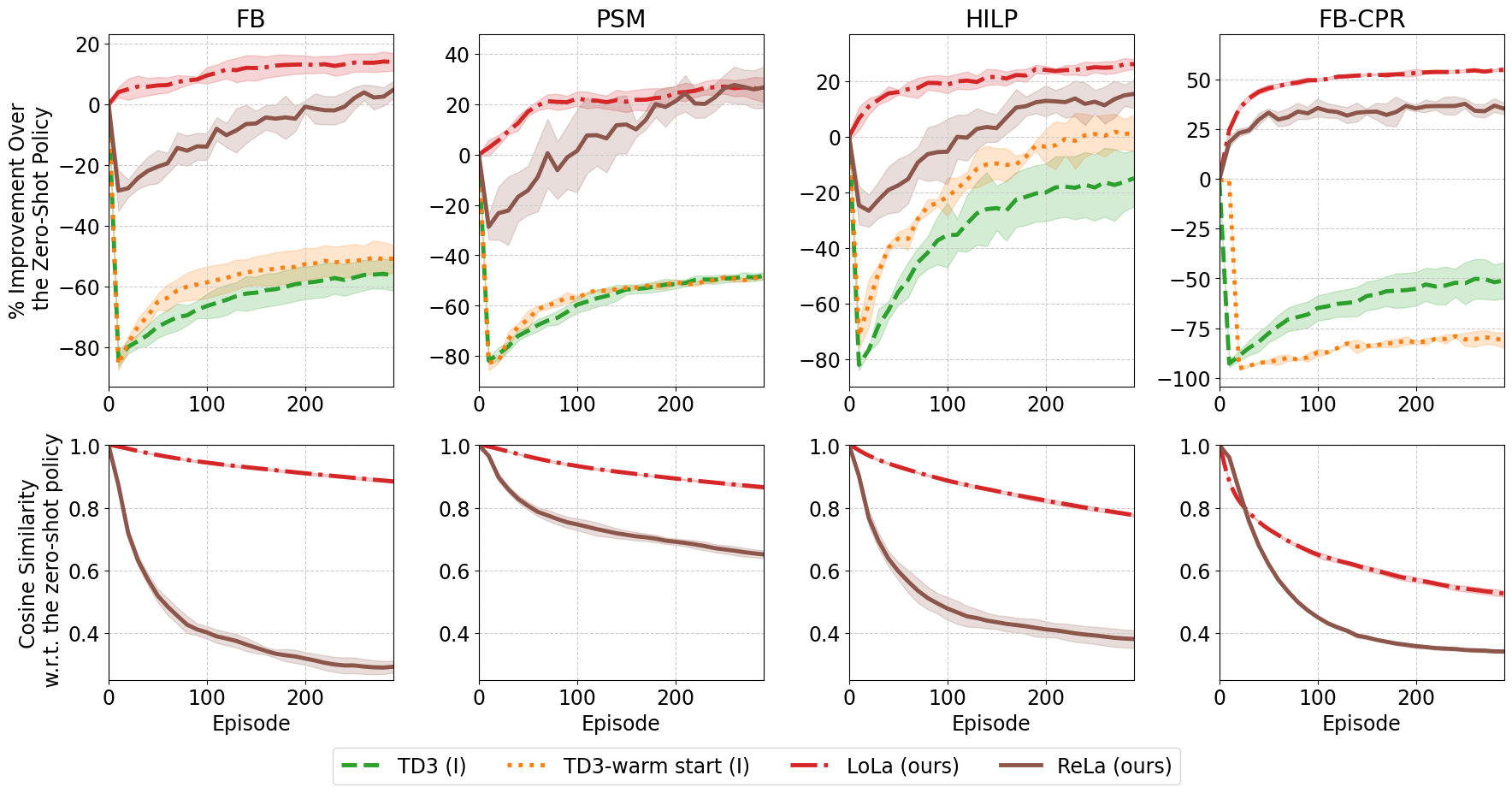}
    \caption{\textbf{Top}: Performance improvement w.r.t.\ the zero-shot policy for different online fast adaptation methods and BFMs. TD3(I) denotes standard action-based TD3 with zero-shot policy initialization, our methods are as described in Section~\ref{sec:methods}. \textbf{Bottom}: Cosine similarity between the zero-shot policy $z_{r}$ and the learned policy $z$ for the methods working in the latent policy space. We report mean and standard deviation over 5 seeds. Results are averaged over 19 tasks for FB, PSM, HILP and 45 tasks for FB-CPR.}
    \label{fig:main_result}
    \vspace{-0.3cm}
\end{figure}

Figure~\ref{fig:main_result} (\emph{top}) shows our aggregated results across tasks for each domain: Latent policy adaptation leads to performance improvements w.r.t.\ the zero-shot policy in the range of 10-30\% for DMC environment and 40-50\% for HumEnv. Compared to Figure~\ref{fig:main_summary_experiment}, these results show that significant improvements are already obtained in few online episodes. For example, \LOLA{} leads to about 10\% (resp. 40\%) improvement for DMC (resp. for HumEnv) in only 20 episodes. These results show that \emph{i)} the space of policies learned by the BFMs contains better policies than the one inferred by the zero-shot procedure and \emph{ii)} such a space can be easily navigated using gradient-based approaches. While both \RELA{} and \LOLA{} provide significant performance improvements, \LOLA{} is the only method to achieve monotonic performance improvement across the board. As we can see from the per-task visualization in Appendix~\ref{app:additional.exp}, the non-monotonic performance of \RELA{} is mostly due to the fact that the methods incurs a noticeable catastrophic forgetting in the \texttt{pointmass} environment where TD3-based methods seem to struggle in the online setting, probably due to exploration issues. As a result of training purely on online samples, critic learning in \RELA{} undergoes distribution shift which has been investigated to lead to initial unlearning~\citep{zhou2024efficient} whereas \LOLA{} skips the critic learning step entirely. In addition, \LOLA{} exploits a privileged information compared to \RELA, the ability to reset the environment to any arbitrary state, which further contributes in stabilizing and speeding up the learning process~\citep[see e.g.,][]{MhammediFR24}.

\textbf{How does the adapted policy evolve in latent space?} To try to better understand the learning dynamics of \RELA{} and \LOLA{} we report the cosine similarity between the adapted $z$ and the zero-shot policy $z_{r}$ in Figure~\ref{fig:main_result} (\emph{bottom}). \RELA{} deviates much more in the latent space from the initial zero-shot policy than \LOLA. This fast and significant change is associated with the drop in performance. On the other hand, despite the high learning rate (we found $0.1$ or $0.05$ to be the best based on the BFM), \LOLA{} remains closer to the zero-shot policy. A potential cause for the significant change in \RELA{} may be difficulties in critic learning associated with distribution shift, which can impact policy directly.  This visualization also shows that while converging to different policies, the performance of \RELA{} and \LOLA{} is comparable after $300$ episodes in the DMC environments. This reveals that policies with similar performance may be associated to with different latent vectors $z$. 

When looking at the baselines, we can notice that all action-space adaptation algorithms suffer a much more significant drop compared to latent policy adaptation. The performance gap between action-based and latent policy adaptation becomes even larger when looking at FB-CPR. In this case, all action-based algorithms completely unlearn in a few steps and are not able to rapidly recover. We think this is due to the large dimensionality of observation space, action space and policy model that lead to a much more complicated optimization problem.\footnote{A way to address this problem may be through policy regularization but this is outside the scope of this paper.} On the other hand, in contrast to other BFMs, FB-CPR does not suffer any initial performance drop when using \RELA. Indeed, all latent policy adaptation algorithms (see Appendix~\ref{app:additional.exp} for additional experiments) achieve monotonic performance improvement, stressing even more that structured search in the latent policy space may be simpler than finetuning the whole policy in high-dimensional problems.
This may be due to the fact that FB-CPR is the only BFM that is pre-trained with online environmental interactions, a setting that may reduce the distribution shift between pretraining and adaptation. Finally, the performance improvement due to the latent policy adaption is much more significant in this domain. The reason may reside in the critic training objective of FB-CPR; indeed FB-CPR uses a discriminator-based loss to regularize the policy space towards expert demonstrations. This may prevent the zero-shot inference to correctly identify the best policy for the task, while online adaptation seems to better search the policy space.

Finally, we would like to report an observation about the computational efficiency. On our hardware, LoLA runs at $\approx 157$x the FPS of ReLA and other adaptation approaches. Specifically, ReLA runs at $\approx 14$ FPS, and LoLA runs at $\approx2,200$. This gaps presumably comes from the fact that \RELA{} needs to backpropagate gradient through the BFM estimated value function and policy both in the critic and actor updates, while \LOLA{} has just a single actor update.
The computational efficiency of LoLA along with its observed near-monotonic improvement for adaptation makes it appealing in practice. 

\begin{figure}[t]
    \centering
    \includegraphics[width=1.0\linewidth]{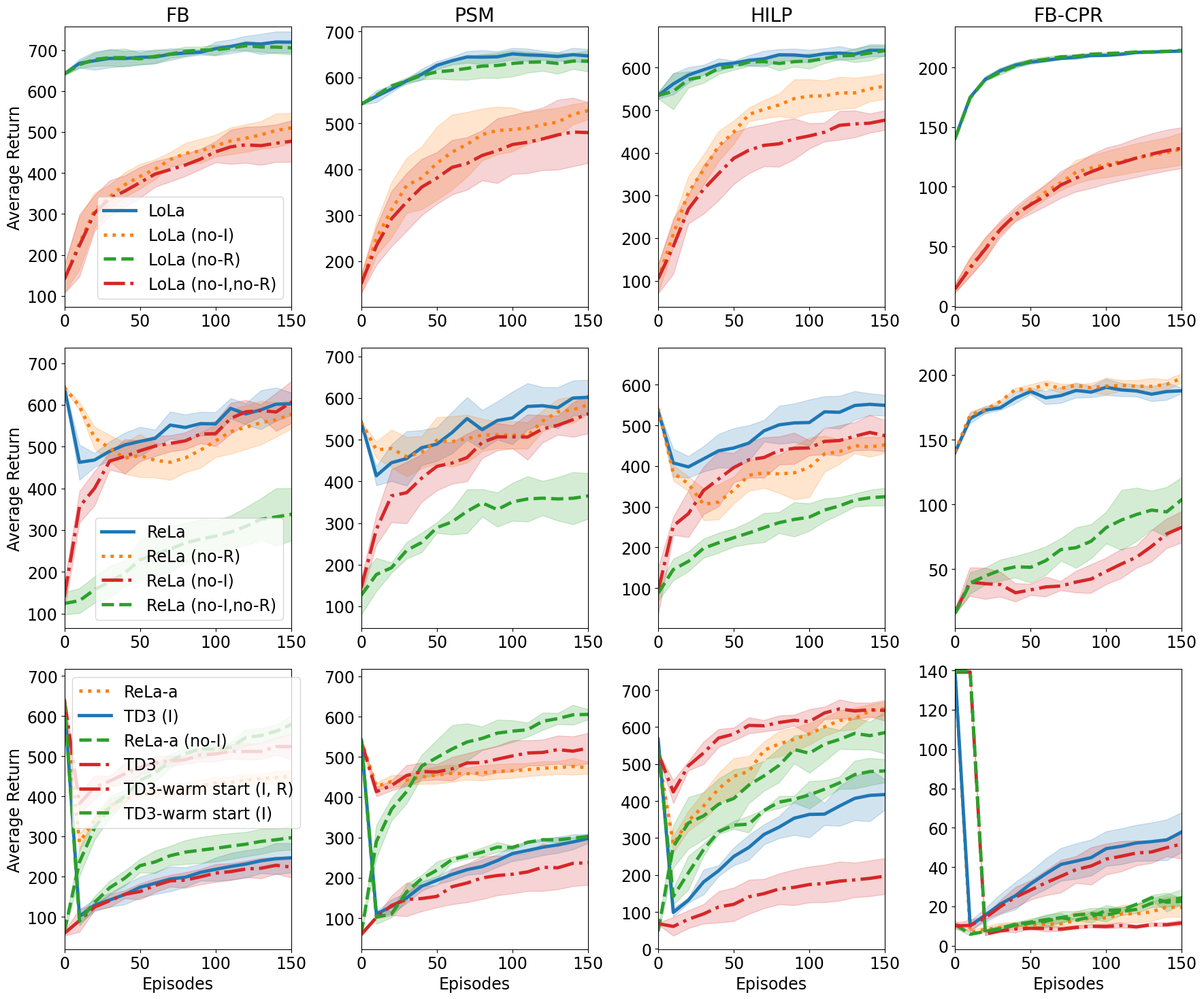}
    \caption{Average returns for several variations of \LOLA, \RELA, and action-based TD3 with warm start. We use \texttt{no-R} to denote that we do not use the BFM's estimated value function (i.e., for \LOLA{} we do not bootstrap the terminal state and for \RELA{} we learn a critic from scratch) and \texttt{no-I} to denote that we do not use zero-shot policy initialization. Finally, for TD3 we use \texttt{R} to denote that we use residual critic since the standard implementation learns a critic from scratch.}
    \label{fig:main_ablation}
    \vspace{-0.3cm}
\end{figure}

\subsection{What components are critical for fast adaptation?}\label{subsec:ablation}
\begin{wraptable}{r}{0.6\textwidth}
\centering
\scriptsize
\setlength{\tabcolsep}{2pt} %
\begin{tabular}{@{}l c c c c@{}}
\toprule
Algorithm & \makecell{Zero-Shot\\Policy Init.} & \makecell{Residual Critic($^\dagger$)/\\Bootstraped\\ Return($^+$)} & \makecell{Critic\\Trained from\\scratch} & Search space \\
\midrule
\LOLA{} & \checkmark & \checkmark ($^+$) &  & $z$ \\
\LOLA{} (no-I) &  & \checkmark ($^+$) &  & $z$ \\
\LOLA{} (no-R) & \checkmark &  & \checkmark & $z$ \\
\LOLA{} (no-I, no-R) &  &  & \checkmark & $z$ \\
\midrule
\RELA{} & \checkmark & \checkmark ($^\dagger$) &  & $z$ \\
\RELA{} (no-I) &  & \checkmark ($^\dagger$) &  & $z$ \\
\RELA{} (no-R) & \checkmark &  & \checkmark & $z$ \\
\RELA{} (no-I, no-R) &  &  & \checkmark & $z$ \\
\midrule
TD3-z &  &  & \checkmark & $z$ \\
TD3 (I) & \checkmark &  & \checkmark & $a$ \\
TD3-warm-start(I) & \checkmark &  & \checkmark & $a$ \\
TD3-warm-start(I, R) & \checkmark & \checkmark ($^\dagger$) &  & $a$ \\
\bottomrule
\end{tabular}
\caption{Summary of algorithm variations. Here, search space $z$ indicates latent policy adaptation via the policy space $\{\pi_z\}$ constructed by the BFM, while $a$ denotes fine-tuning in action space.}
\vspace{-0.3cm}
\label{tab:main.algorithms}
\end{wraptable}

In this section we assess the importance of leveraging BFM properties for fast online adaptation. We focus on ablating the need of \emph{i)} zero-shot initialization and \emph{ii)} BFM value function estimate, i.e., using a residual critic for \RELA{} and the bootstrapped Q-function for \LOLA. We focus on the very early steps of the training to better inspect the results. Ablation variants are concisely shown in Table~\ref{tab:main.algorithms} for reference, and results are reported in Figure~\ref{fig:main_ablation}.

When \emph{zero-shot initialization} is disabled  not only the performance starts lower but also take significantly longer to match the baseline's returns (if they match at all). Unsurprisingly, zero-shot initialization helps in the search process. Leveraging the BFM's value function estimate does not hurt and often helps in reducing the initial performance drop. Looking at \LOLA, BFM bootstrapping helps only marginally in all the domains. We believe that this is due to the small discount factor and large lookahead (we use $0.98$ and a lookahead of $100$ or $250$); this combination significantly reduces the role of the bootstrapped Q-function (discounted by 0.13 or 0.006). When looking at \RELA, residual critic helps in DMC domains but not in the HumEnv, where zero-shot initialization is the most important dimension. On the other hand, when zero-shot initialization is disabled in the DMC domains, the importance of residual critic is particularly evident and leads to almost match the performance of the best algorithm. Finally, the residual critic is also very important when performing direct adaptation in the action-space and helps in a faster recovery from the initial drop.

\begin{figure}[tb]
    \centering
    \includegraphics[width=0.99\textwidth]{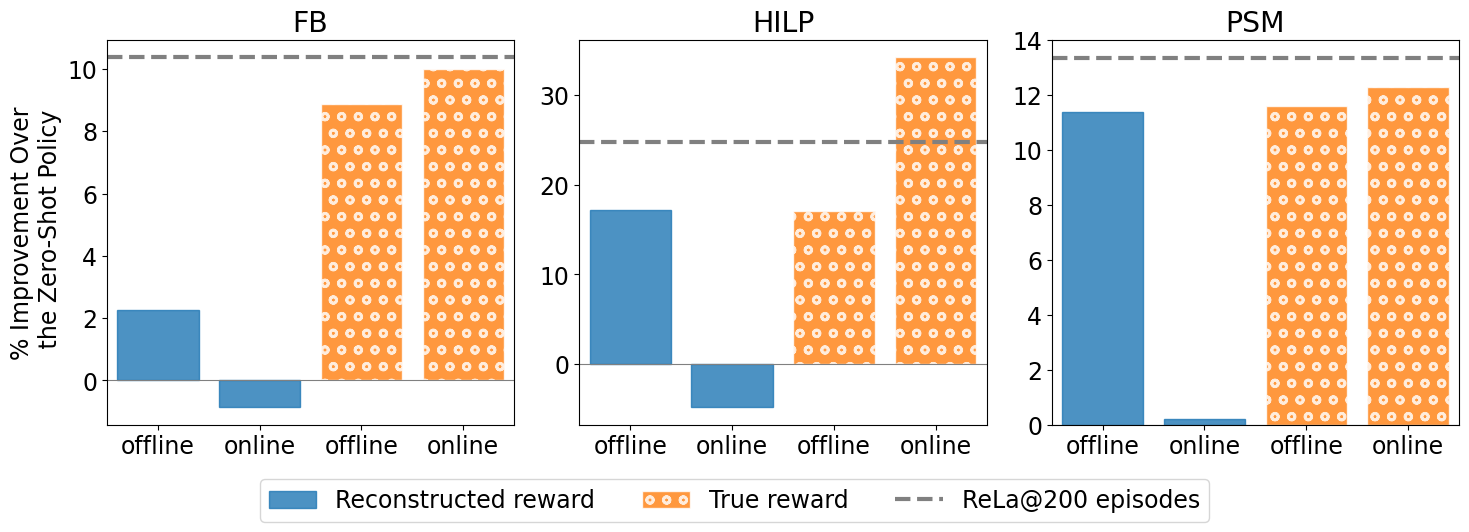}
    \caption{Performance improvement w.r.t.\ the zero-shot policy for a TD3-based method trained from scratch for 3M steps to perform search in the latent policy space (i.e., TD3-z). We report the results for both online and offline training using the ExoRL~\citep{exorl} dataset. We also ablate learning with the true task reward and the reward reconstructed by the BFM methods. We average the results over all the task of the \texttt{Walker}, \texttt{Quadruped} and \texttt{Cheetah} domains. We report the average performance over 5 seeds. We additionally show the performance of \RELA~ after training for $200$ episodes.}
    \label{fig:td3_ablation}
\end{figure}

\subsection{Dissecting the Suboptimality of Zero-Shot RL Policies}
The previous results show that BFMs are indeed learning skills that contain good policies for all the downstream tasks we study. This then raises the question on what causes the suboptimality of the zero-shot policy and the need of performing online adaptation to actually recover a better policy. We run a series of ablations with FB, HILP and PSM. We do not consider FB-CPR because our ablation involves offline training and we do not have access to an offline dataset for this model since it was trained online. We consider TD3 as learning algorithm since it is the building block of all the three BFMs and focus on latent policy adaptation (we call this approach TD3-z to avoid confusion with TD3 used in the previous sections to optimize the full policy network). For all experiments in this section, we consider the standard scenario of \textbf{training from scratch}, no zero-shot initialization and no-residual critic. Specifically when searching in $z$ space, we use a pretrained BFM actor and initialize $z$ along with the critic randomly and when learning in action space we initialize both the actor and critic randomly.  We report the performance of TD3-z after 3M training steps when using the true reward function and the reward function reconstructed by the BFMs\footnote{Latent or reconstructed reward is given by $\tilde{r}_z(s)=\varphi(s).z_r$}, both offline and online. We do not consider \texttt{pointmass} in this test since TD3 does not work well when trained online due to the challenging exploration in the long-horizon tasks considered in this domain.

Overall, these experiments confirm that BFMs can express much better policies than the zero-shot policy and that optimizing for true rewards is crucial to unlock their full performance. When optimizing for the latent reward, offline TD3-z can already improve the zero-shot performance revealing the difficulty of optimizing all polices $\{\pi_z\}$ simultaneously during the pre-training process.\footnote{The fact that performance of HILP and PSM improve by about than 10-15\% by offline training on the reconstructed reward might may be due to a non-perfect pre-train. Indeed, the pre-training condition should ensure that the actor is already optimal on any reconstructed reward on the training data distribution.} Interestingly, when moving to online training on the latent reward performance can even drop. We conjecture the cause is the distribution shift between online and offline samples. Given that the models were trained offline, their reward prediction degrades on out-of-distribution samples encountered during online adaptation, further skewing towards learning policies that are even less correlated to the true reward. This is confirmed when looking at the performance when optimizing for the true reward, which consistently lead to better results across offline and online tests, with online methods being overall better. This ablation confirms that focusing on searching in the z-space, while correcting the embedding errors is the right strategy to achieve fast adaptation online. Indeed, we see that \RELA{}  recovers better policies than the one obtained by training from scratch TD3-z online for 3M episodes in only 200 episodes. Even faster if we use \LOLA. This shows that leveraging information from the BFM is useful in many cases.

\section{Related Work}

    \textbf{Unsupervised RL pre-training:} For language and vision, unsupervised pretraining has paved the way to extracting meaningful structure from data, scaling up, and obtaining impressive results for transfer and zero-shot generalization to different downstream tasks. In recent years, approaches have been proposed for unsupervised reinforcement learning: a training paradigm where a learning agent attempts to extract world structure and representations that will later allow it to solve diverse multi-step decision-making problems in the environment. Various objectives have been proposed: world modeling~\citep{bruce2024genie,hansen2023td}, intrinsic rewards~\citep{schmidhuber2019reinforcement,stadie2015incentivizing,sekar2020planning,pathak2017curiosity}, empowerment and mutual information skill learning~\citep{klyubin2005empowerment,eysenbach2018diversity, rajeswar2023mastering,gregor2016variational}, goal-reaching~\citep{ma2022vip,park2023metra,park2024foundation}, successor measures~\citep{fballreward,agarwal2024proto,sikchi2024rl} among others. In this work, we restrict our focus to the class of unsupervised RL objectives that learn a family of policies and allow us to query for a near-optimal policy given any test-time reward function without further learning or planning in the environment. Approaches belonging to this class often learn a state representation and use that to define the class of reward functions for which they learn the set of optimal policies. At inference time, they output the policy that is optimal for the projection of the reward function to this class of reward functions.

\textbf{Fine-tuning and adaptation with unsupervised RL models:} Similar to supervised pre-training approaches, unsupervised zero-shot RL models are not expected to output optimal policies for the given task. Rather they are expected to output a reasonable policy initialization that can be later finetuned or adapted. Prior approaches for policy adaptation with pre-trained RL models have mostly studied the offline-to-online setting. In this setting, a policy is trained with reward-labeled transitions first using offline data with specialized offline RL algorithms and then allowed to fine-tune by interacting with the environment and \textit{retaining access to the offline data}. Offline RL algorithms~\citep{levine2020offline,sikchi2023dual} incorporate pessimism to avoid overestimation by restricting the policy to visit states closer to the dataset. Naively using the same algorithm to finetune online has been observed to lead to slow performance improvements and using an online RL algorithm leads to performance collapse at the beginning of fine-tuning~\citep{luo2023finetuning}. This behavior has been attributed to a distribution shift for critic-learning~\citep{pmlr-v202-yu23k}, and prior works have investigated various techniques, such as calibration of Q-functions to mitigate this problem~\citep{nakamoto2023cal}. Our work, considers a different but practical paradigm for adaptation where a) no offline data is retained during finetuning and b) we learn from reward-free transitions. First, by only retaining pre-trained models we reduce compute requirements of learning from large pre-training offline datasets~\citep{zhou2024efficient}, and by considering reward-free transitions we have a single model that can adapt to any downstream task~\citep{kim2024unsupervised}. Learning online without retaining offline data suffers a significant initial drop of performance with respect to the pre-trained policy as recently investigated by \citet{zhou2024efficient}.

\section{Conclusion}
Unsupervised zero-shot RL pre-training can result in an agent (a type of Behavioral Foundation Model, BFM) capable of accomplishing a wide variety of tasks having a noticeable but expected degree of suboptimality. This paper investigates and addresses the question of how to adapt these agents to be better at a task specified during test-time with limited environment interactions. We propose two fast adaptation strategies (\LOLA{} and \RELA{}); The key insight behind our methods is to reuse pre-trained knowledge from BFM strategically and search over the learned latent policy space that provides a low-dimensional landscape favorable for gradient-based optimization. We have demonstrated the effectiveness of these strategies across various zero-shot BFMs. Notably, \LOLA{}, an actor-only adaptation algorithm, demonstrates monotonic performance improvement on all domains and BFMs, making it a reliable choice when privileged resets are permitted. However, our findings also reveal an initial performance drop when employing any actor-critic method, including our proposed \RELA{} algorithm. This highlights the need for further investigation into mitigating forgetting in the actor-critic class of approaches. Future research directions include exploring meta-learning adaptation techniques, including in-context adaptation by learning to adapt in multi-task settings to optimize learning costs and improve overall performance. 

\clearpage
\newpage
\bibliographystyle{assets/plainnat}
\bibliography{paper}

\clearpage
\newpage

\beginappendix

\section{Motivation for residual learning}
\label{ap:residual_motivation}
\paragraph{Notation: } 
We use matrix form and we identify for any $z \in \mathcal{Z}$, $\psi_z: \mathcal{S} \times \mathcal{A} \rightarrow \R^d$ and $\phi: \mathcal{S} \times \mathcal{A} \rightarrow \R^d$ by $\psi_z \in \R^{d \times |\mathcal{S} \times \mathcal{A}|}$ and $\phi \in \R^{d \times |\mathcal{S}|}$ respectively and $D_\rho = \text{diag}((\rho(s))_{s \in \mathcal{S}})$. Similarly, we identify for any $z \in \mathcal{Z}$, $F_z: \mathcal{S} \times \mathcal{A} \rightarrow \R^d$ and $B: \mathcal{S} \times \mathcal{A} \rightarrow \R^d$ by $F_z \in \R^{d \times |\mathcal{S} \times \mathcal{A}|}$ and $B \in \R^{d \times |\mathcal{S}|}$ respectively.
We denote by $\Pi_{\phi} = \phi^\top \left( \phi D_\rho \phi^\top \right)^{-1} \phi D_\rho$ the $L^2(\rho)$ orthogonal projection onto the linear span of $\phi$.
\begin{proposition}
    Let $\phi: S \rightarrow \R^d$ a state feature map and $\{\psi_z \}_{z \in Z}$ the corresponding universal successor features for the policy family $\{\pi_z \}_{z \in Z}$, \textit{i.e} 
        $\psi_z(s, a) = \mathbb{E} [ \sum_{t \geq 0} \gamma^t \phi(s_{t+1}) \mid (s, a), \pi_z ] $
Then, for any reward function $r: S \rightarrow R$, we have: 
$Q_r^{\pi_z} = \psi_z^\top z_r + Q^{\pi_z}_{ r - \phi^\top z_r}$
where $z_r = \mathbb{E}_{s \sim \rho }[\phi(s) \phi(s)^\top]^{-1} \mathbb{E}_{s \sim \rho }[\phi(s) r(s)]$, and $Q^{\pi_z}_{r - \phi^\top z_r }$ is the Q-function of the residual reward $r - \phi^\top z_r = ( I - \Pi_{\phi}) r$.
\end{proposition}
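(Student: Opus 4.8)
The plan is to prove the identity by exploiting the linearity of the Q-function in the reward, which is a direct consequence of the linear relationship $Q^\pi_r = M^\pi r$ noted earlier. Concretely, for any fixed policy $\pi_z$, the map $r \mapsto Q^{\pi_z}_r$ is linear (it is just integration against the successor measure $M^{\pi_z}$), so I can split $r = \varphi^\top z_r + (r - \varphi^\top z_r)$ and write $Q^{\pi_z}_r = Q^{\pi_z}_{\varphi^\top z_r} + Q^{\pi_z}_{r - \varphi^\top z_r}$. This is the whole content of the first displayed line of Equation~\eqref{eq:residual_decomposition}, and it requires no structural assumption on $r$ at all.

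Next I would identify the first term. Since $\varphi^\top z_r$ is by construction a reward in the linear span of $\varphi$ with weight vector $z_r$, the successor-feature property recalled in the preliminaries gives $Q^{\pi_z}_{\varphi^\top z_r}(s,a) = z_r^\top \psi_z(s,a) = \psi_z(s,a)^\top z_r$. Here it is important to note that $\psi_z$ is the successor feature of the policy $\pi_z$ for which we are evaluating, so the index matches; this is exactly what the USF definition provides. Substituting back yields $Q^{\pi_z}_r = \psi_z^\top z_r + Q^{\pi_z}_{r - \varphi^\top z_r}$, which is the claimed decomposition.

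The remaining piece is the identification $r - \varphi^\top z_r = (I - \Pi_\varphi) r$, which is purely a fact about $L^2(\rho)$ projection and does not involve the MDP dynamics. Here I would recall that $z_r = \mathbb{E}_{s\sim\rho}[\varphi(s)\varphi(s)^\top]^{-1}\mathbb{E}_{s\sim\rho}[\varphi(s) r(s)]$ is precisely the least-squares coefficient of regressing $r$ onto $\varphi$ under $\rho$, so $\varphi^\top z_r = \Pi_\varphi r$ in the matrix notation of the appendix (where $\Pi_\varphi = \varphi^\top(\varphi D_\rho \varphi^\top)^{-1}\varphi D_\rho$). A one-line computation checks that $(\varphi^\top(\varphi D_\rho \varphi^\top)^{-1}\varphi D_\rho) r$ evaluated coordinatewise equals $\varphi(s)^\top z_r$, and hence $r - \varphi^\top z_r = (I - \Pi_\varphi) r$ is the residual of this projection.

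I do not anticipate a genuine obstacle: the proof is essentially a two-line consequence of linearity of $Q$ in the reward plus the defining property of successor features, together with the elementary observation that $z_r$ is a projection coefficient. The only point requiring mild care is bookkeeping of which successor feature carries which policy index (the policy being evaluated, $\pi_z$, versus the latent $z_r$ used only to define the reward split), and making sure the matrix/operator identifications in the appendix's notation are stated cleanly enough that the projection claim is manifestly correct. Everything else is routine substitution.
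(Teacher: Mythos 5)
Your proof is correct and follows essentially the same route as the paper's: both arguments rest on linearity of $r \mapsto M^{\pi_z} r$, the identity $\psi_z^\top = M^{\pi_z}\phi^\top$ for the first term, and the observation that $\phi^\top z_r = \Pi_\phi r$ is the $L^2(\rho)$ least-squares projection of $r$ onto the span of $\phi$. The only cosmetic difference is that the paper starts from the operator decomposition $r = (\Pi_B + I - \Pi_B)r$ and identifies $z_r$ at the end, whereas you split $r$ as $\phi^\top z_r + (r - \phi^\top z_r)$ up front and verify the projection identity last.
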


\begin{proof}

    for any reward function $r \in \R^{S}$, we have $r = ( \Pi_{B} + I - \Pi_{B})r$, then 
    \begin{align*}
        Q^{\pi_z}_r & = M^{\pi_z} r \\
        & =  M^{\pi_z} ( \Pi_{B} + I - \Pi_{B})r  \\ 
        & = M^{\pi_z} \Pi_{B} r + M^{\pi_z} (I - \Pi_{B})r \\
        & = M^{\pi_z} \phi^\top \left( \phi D_\rho \phi^\top \right)^{-1} \phi D_\rho r + M^{\pi_z} (r - \phi^\top \left( \phi D_\rho \phi^\top \right)^{-1} \phi D_\rho r ) \\
        & = \psi_z^\top z_r + Q^{\pi_z}_{r - \phi^\top z_r} 
    \end{align*}
    where the last equation follows from the fact that $\psi_z^\top = M^{\pi_z} \phi^\top $ and $z_r = \mathbb{E}_{s \sim \rho }[\phi(s) \phi(s)^\top]^{-1} \mathbb{E}_{s \sim \rho }[\phi(s) r(s)] = \left( \phi D_\rho \phi^\top \right)^{-1} \phi D_\rho r$
\end{proof}
\begin{proposition}
    Let assume that for any $z \in Z$,  $F_z$ is a stationary point of the FB training loss $\ell(F, B)$, namely, the functional derivative $\frac{\partial l}{\partial F_z}$ of the loss with respect of $F_z$ is 0.
    Then, 
    \begin{align*}
    Q^{\pi_z}_r & = F_z^\top z_r + Q^{\pi_z}_{(I - \Pi_{B})r} \\
    & = F_z^\top z_r + 
    \left(M^{\pi_z} - F_z^\top B D_\rho \right) \left(r - \Pi_{B} r \right)
\end{align*}
where $z_r = \mathbf{E}_{s \sim \rho}[B(s) r(s)]$.
\end{proposition}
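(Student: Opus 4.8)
The plan is to mirror the proof of the preceding USF proposition, but now exploiting the specific structure of the FB loss stationarity condition rather than the definition of successor features. The key identity to establish first is the analog of $\psi_z^\top = M^{\pi_z}\phi^\top$: namely, that at a stationary point $F_z$ of $\ell(F,B)$, we have $F_z^\top B D_\rho = \Pi_{M^{\pi_z}}$ or, more precisely, that $F_z^\top B D_\rho$ equals the $L^2(\rho)$-projection of $M^{\pi_z}$ onto the span of $B$. Concretely, I would first recall that the FB population loss penalizes the discrepancy between $F(s,a,z)^\top B(s')\rho(s')$ and $M^{\pi_z}(\mathrm{d}s'|s,a)$ in an $L^2(\rho)$ sense, so setting $\partial\ell/\partial F_z = 0$ yields the normal equations $F_z^\top (B D_\rho B^\top) = M^{\pi_z} D_\rho B^\top$, equivalently $F_z^\top = M^{\pi_z} D_\rho B^\top (B D_\rho B^\top)^{-1}$. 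Right-multiplying by $B D_\rho$ gives $F_z^\top B D_\rho = M^{\pi_z}\Pi_B$, the composition of $M^{\pi_z}$ with the orthogonal projector onto $\mathrm{span}(B)$.

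With that identity in hand, the rest is a short computation. For any reward $r$, write $r = \Pi_B r + (I-\Pi_B)r$ and apply $M^{\pi_z}$ to both pieces, using $Q^{\pi_z}_r = M^{\pi_z} r$:
\begin{align*}
Q^{\pi_z}_r &= M^{\pi_z}\Pi_B r + M^{\pi_z}(I - \Pi_B)r \\
&= F_z^\top B D_\rho r + Q^{\pi_z}_{(I-\Pi_B)r}.
\end{align*}
Since $z_r = \mathbb{E}_{s\sim\rho}[B(s)r(s)] = B D_\rho r$, the first term is exactly $F_z^\top z_r$, giving the first displayed equality. For the second displayed equality, observe that on the residual reward $(I-\Pi_B)r$ we may add and subtract the ``FB-approximate'' Q-function: $Q^{\pi_z}_{(I-\Pi_B)r} = M^{\pi_z}(I-\Pi_B)r = \big(M^{\pi_z} - F_z^\top B D_\rho\big)(I-\Pi_B)r + F_z^\top B D_\rho(I-\Pi_B)r$, and the last term vanishes because $B D_\rho(I-\Pi_B) = B D_\rho - B D_\rho \Pi_B = B D_\rho - B D_\rho = 0$ (idempotency/self-adjointness of $\Pi_B$ composed appropriately). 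Hence $Q^{\pi_z}_{(I-\Pi_B)r} = (M^{\pi_z} - F_z^\top B D_\rho)(r - \Pi_B r)$, which is the stated form.

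The main obstacle I anticipate is pinning down the exact functional form of the FB loss and verifying that its stationarity condition in $F_z$ really does reduce to the least-squares normal equation $F_z^\top B D_\rho B^\top = M^{\pi_z} D_\rho B^\top$ rather than something with extra weighting factors or cross-terms coming from how the loss couples different $z$ and different $(s,a)$. Depending on whether the loss is the measure-matching $\|F_z^\top B \rho - M^{\pi_z}\|^2_{L^2(\rho\otimes\rho)}$ or a TD-style surrogate, the stationary point might only give the stated identity up to the fixed-point structure of the Bellman operator; I would need to invoke (or state as a standing assumption) that at the stationary point the successor-measure Bellman equation holds exactly for $\pi_z$, so that $F_z^\top B D_\rho$ genuinely represents $M^{\pi_z}\Pi_B$. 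Once that reduction is justified, everything else is linear algebra with the projector $\Pi_B$ and should go through exactly as in the USF proof above. I would also double-check the normalization convention for $z_r$ (here $z_r = B D_\rho r$ with no $(B D_\rho B^\top)^{-1}$ prefactor, consistent with the paper's earlier convention $\varphi = (BB^\top)^{-1}B$ absorbing that inverse into $\varphi$), to make sure the $F_z^\top z_r$ term is stated with the right object.
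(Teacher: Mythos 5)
Your proposal follows essentially the same route as the paper: establish $F_z^\top B D_\rho = M^{\pi_z}\Pi_B$ from stationarity, split $r=\Pi_B r+(I-\Pi_B)r$, and use idempotency of the projector for the second displayed equality (your ``add and subtract $F_z^\top BD_\rho(I-\Pi_B)r$'' step is the same argument as the paper's $\Pi_{B^\perp}^2=\Pi_{B^\perp}$). The one step you leave open is exactly where the paper does its real work, and it is resolved without any extra standing assumption: the FB loss is indeed the TD-style surrogate, so $\partial\ell/\partial F_z=0$ does not give your one-shot normal equation but rather the fixed-point relation $F_z^\top BD_\rho B^\top = PB^\top + \gamma P^{\pi_z}F_z^\top BD_\rho B^\top$; since $I-\gamma P^{\pi_z}$ is invertible and $M^{\pi_z}=P+\gamma P^{\pi_z}M^{\pi_z}$, this unrolls to $F_z^\top = M^{\pi_z}B^\top(BD_\rho B^\top)^{-1}$, whence $F_z^\top BD_\rho = M^{\pi_z}\Pi_B$. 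Separately, your written normal equation $F_z^\top(BD_\rho B^\top)=M^{\pi_z}D_\rho B^\top$ has a misplaced $D_\rho$: the regression target in the FB loss is the density $M^{\pi_z}D_\rho^{-1}$, not the measure $M^{\pi_z}$, so the right-hand side should be $M^{\pi_z}B^\top$; with your version, right-multiplying by $BD_\rho$ would yield $M^{\pi_z}D_\rho\Pi_B$ rather than $M^{\pi_z}\Pi_B$ and the first displayed equality would not follow. Everything downstream of the key identity is correct and matches the paper.
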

\begin{proof}
Let's remind the FB training loss:
\begin{equation*}
    \ell(F, B) = \E_{\substack{z, (s, a) \sim \rho \\ s^+ \sim \rho }} \left[ \left( F(s, a, z)^\top B(s^+) - P(\mathrm{d}s^+ \mid s, a) / \rho(\mathrm{d}s^+) -  (P^{\pi_z} \bar{F})(s, a, z)^\top \bar{B}(s^+)\right)^2\right]
\end{equation*}
In matrix form, we obtain: 
\begin{equation*}
    \ell(F, B) = \E_z \left[ \text{Trace}\left( \left( F_z^\top B - PD_\rho^{-1} - \gamma P^{\pi_z} \bar{F}_z^\top \bar{B} \right)^\top D_\rho \left( F_z^\top B - PD_\rho^{-1} - \gamma P^{\pi_z} \bar{F}_z^\top \bar{B} \right) D_\rho\right) \right]
\end{equation*}
if $F_z$ satisfies the  stationarity conditions, \textit{i.e}, $\frac{\partial \ell}{\partial F_z} = 0$, then, we have 
\begin{align*}
    \frac{\partial \ell}{\partial F_z} = 0 & \Rightarrow 2 B D_\rho \left( F_z^\top B - PD_\rho^{-1} - \gamma P^{\pi_z} F_z^\top B \right)^\top D_\rho = 0 \\
    & \Rightarrow 2 D_\rho \left( F_z^\top B - PD_\rho^{-1} - \gamma P^{\pi_z} F_z^\top B \right) D_\rho B^\top = 0 \\
    & \Rightarrow F_z^\top B D_\rho B^\top = PB^\top + \gamma P^{\pi_z}F_z^\top B D_\rho B^\top \\
    & \Rightarrow F_z^\top = M^{\pi_z} B^\top \left( B D_\rho B^\top \right)^{-1} \\
    & \Rightarrow F_z^\top B D_\rho = M^{\pi_z} B^\top \left( B D_\rho B^\top \right)^{-1} \\
    & \Rightarrow F_z^\top B D_\rho = M^{\pi_z} B^\top \left( B D_\rho B^\top \right)^{-1} B D_\rho
\end{align*}
Therefore  $F_z^\top B D_\rho = M^{\pi_z} \Pi_{B}$ where $\Pi_{B} = B^\top \left( B D_\rho B^\top \right)^{-1} B D_\rho$ is the $L^2(\rho)$ orthogonal projection onto the linear span of $B$.

Let $\Pi_{B^\perp}$ the orthogonal projection onto the orthogonal of $B$. By definition, we have $\Pi_{B^\perp} = I - \Pi_{B}$

We have: 
\begin{align*}
    Q^{\pi_z}_r & = M^{\pi_z} r \\
    & = M^{\pi_z} (\Pi_B + \Pi_{B^\perp}) r \\
    & = M^{\pi_z} \Pi_B r + M^{\pi_z} \Pi_{B^\perp} r \\
    & = F_z^\top B D_\rho r + M^{\pi_z} \Pi_{B^\perp} r \\
    & = F_z^\top z_r + M^{\pi_z} \Pi_{B^\perp} r
\end{align*}
where $z_r = B D_\rho r = \E_{s \sim \rho}[B(s) r(s)]$

Therefore, we have:
\begin{align*}
    Q^{\pi_z}_r = F_z^\top z_r + Q^{\pi_z}_{\Pi_{B^\perp} r}
\end{align*}
where the second term is the the Q-function of the residual reward $\Pi_{B^\perp} r$ 

Moreover, since $\Pi_{B^\perp}^2 = \Pi_{B^\perp}$, we can write: 
\begin{align*}
    Q^{\pi_z}_r & = F_z^\top z_r + 
    \left(M^{\pi_z} \Pi_{B^\perp} \right) \left(\Pi_{B^\perp} r \right) \\
    & = F_z^\top z_r + 
    \left(M^{\pi_z} - F_z^\top B D_\rho \right) \left(r - \Pi_{B} r \right)
\end{align*}
Which means that the residual term can be expressed as the successor measure approximation error (due to the low-rank decomposition of FB model)  multiplied by the reward error (due to the reward embedding in the span of B).
 \end{proof}

\section{Pseudocode}
Algorithm~\ref{algo:rela} and~\ref{algo:lola} outline pseudocode for ReLA and LoLA respectively.
\begin{figure}[h]
    \centering
    \begin{minipage}{0.47\textwidth}
        \begin{algorithm}[H]
            \scriptsize
        \label{algo:rela}
            \caption{ReLA}
            \SetAlgoLined
            \DontPrintSemicolon
            \textbf{Load} Frozen BFM's successor features $\psi(s,a,z)$ and policy $\pi_z(s)$ networks.\;
            \textbf{Initialize} residual critic networks $Q^{\text{residual}}_{\theta_1}$, $Q^{\text{residual}}_{\theta_2}$, replay buffer $\mathcal{D}_{\text{online}}$, exploration std $\sigma$, Update to Data ratio (UTD) $M$, Initialize target networks: $Q^{\text{residual}}_{\theta'_1} \leftarrow Q^{\text{residual}}_{\theta_1}$, $Q^{\text{residual}}_{\theta'_2} \leftarrow Q^{\text{residual}}_{\theta_2}$.\;
            \textbf{Compute zero-shot latent $z_{r}$} using inference samples for the BFM agent with test-time reward function.\;
            \For{each environment step $t$}{
                Select $a_t = \pi_{z}(s_t) + \epsilon$,  $\epsilon \sim \mathcal{N}(0, \sigma)$\;
                Execute $a_t$; observe $r_t$, $s_{t+1}$\;
                Store $(s_t, a_t, r_t, s_{t+1})$ in $\mathcal{D}_{\text{online}}$\;
                Sample M mini-batches $\text{Batch}_i=\{(s_i, a_i, r_i, s'_i)\}\sim \mathcal{D}_{online}$\;
                Compute target Q-value: $ y_i = r_i + \gamma (\psi(s'_i,\pi_z(s'_i),z_r)\cdot z_r+\min\{Q_{\theta'_1}(s'_i, \pi_z(s'_i)),\, Q_{\theta'_2}(s'_i, \pi_z(s'_i))\})$\;
                $\textbf{TemporalDifferenceUpdate}(\psi(s_i,a_i,z_r)\cdot z_r+Q^{\text{residual}}_{\theta_{k\in[1,2]}},y_i)$ for $i \in[M]$ using critic parameterization from Eq~\ref{eq:residual_decomposition}\;
                {\textbf{Latent Policy Update} Update $z$ taking gradient step as in Eq~\ref{eq:latent_policy_update} on $\cup_{i \in[m]}\text{Batch}_i$ .}\;
                Update target networks by polyak averaging;
            }
        \end{algorithm}
    \end{minipage}
    \hfill
    \begin{minipage}{0.47\textwidth}
        \begin{algorithm}[H]
            \scriptsize
        \label{algo:lola}
            \caption{LoLA}
            \SetAlgoLined
            \DontPrintSemicolon
            \textbf{Load} Frozen BFM's successor features $\psi(s,a,z)$ and policy $\pi_z(s)$ networks\;
            \textbf{Initialize} latent policy $\pi_{\mu, \sigma}=\mathcal{N}(\mu=z_{r},\sigma)$, replay buffer $\mathcal{D}_{\text{online}}$, sampling state distribution $\nu(\mathcal{D}_{\text{online}},d_0)$, z budget $k$, intial state budget $m$, horizon $n$\;
            \textbf{Compute zero-shot latent $z_{r}$} using inference samples for the BFM agent with test-time reward function.\;
            \For{each gradient step}{
            \For{b=1..m}{
                $s_0 \sim \mu(\mathcal{D}_{\text{online}},d_0)$ \;
                \For{i=1..k}{
                    $z^i_{b} \sim \pi_{\mu, \sigma}$, Reset to $s_0$  \;
                    Rollout trajectory $\tau^i_{b}$ by taking actions given by $a_t = \pi_{z^i_{b}}(s_t)$ \;
                    Compute $R(s_0,z^i_{b})$\;
                    Collect states from $\tau^i_{b}$ in $\mathcal{D}_{\text{online}}$
                }
                
            }
            Update $\pi_{\mu, \sigma}$ by taking gradient step in Eq~\ref{eq:lola_loss}.
            }
        \end{algorithm}
    \end{minipage}
    \caption{Pseudocode of our proposed adaptation methods: Residual Latent Adaptation (ReLA) and Lookahead Latent Adaptation (LoLA).}
    \vspace{-0.3cm}
\end{figure}

\section{Experimental Setup}
\label{ap:experiments}
\subsection{Environments}
\label{ap:env_info}

We list the continuous control environments from the DeepMind Control Suite \citep{tassa2018deepmind} and Humenv~\citep{tirinzoni2025zeroshot} used in this work in Table~\ref{tab:env_list}.

\begin{table}[h]
    \centering
    \begin{tabular}{cccc}
        \hline
        Domain & Observation dimension & Action dimension & Episode length\\
        \hline
        Pointmass &4 &2 & 1000\\
        Walker & 24 & 6 & 1000\\
        Cheetah & 17 & 6 & 1000\\
        Quadruped & 78 & 12 & 1000\\
        HumEnv & 358 & 69 & 300\\
        \hline
    \end{tabular}
    \caption{Overview of observation spaces, action spaces and episode length of environments used in this work.}
\label{tab:env_list}
\end{table}

\subsection{Behavioral Foundation Models}
We trained all the BFMs except for the FB-CPR model that is publicly available (\href{https://github.com/facebookresearch/metamotivo}{code link}).

\paragraph{Offline BFMs.} We train the BFMs using the publicly available dataset from ExoRL~\citep{exorl} collected using the RND algorithm~\citep{DBLP:conf/iclr/BurdaESK19}. We used the authors implementation for FB and FB-CPR (\href{https://github.com/facebookresearch/metamotivo}{code link}) and reimplemented PSM and HILP. We report in Table~\ref{table:bfm-hparams} the set of hyperparameter used for the algorithms.

\begin{table}[h]
\renewcommand{\arraystretch}{1.25}
\centering
\caption{BFM hyperparameters. We largely reuse the hyperparameters from \citet{pirotta2024fast} for FB, from~\citep{park2024foundation} for HILP.}\label{table:bfm-hparams}
\resizebox{
\ifdim\width>\columnwidth
    \columnwidth
  \else
    \width
  \fi
}{!}{
\begin{tabular}{@{}cclllll@{}}
\toprule
& & {\fontsize{11}{13}\selectfont \textbf{Hyperparameter}}
& {\fontsize{11}{13}\selectfont \textbf{Walker}}
& {\fontsize{11}{13}\selectfont \textbf{Cheetah}}
& {\fontsize{11}{13}\selectfont \textbf{Quadruped}}
& {\fontsize{11}{13}\selectfont \textbf{Pointmass}} \\ \midrule
\multirow{7}{*}{FB} &\multirow{5}{*}{\makecell[c]{Forward Backward\\ \citep{fballreward}}} &
Embedding Dimension $d$ & $100$ & $50$ & $50$ & $100$ \\
&& Embedding Prior & $S^d$ & $S^d$ & $S^d$ & $S^d$ \\
&& Embedding Prior Goal Prob.& $0.5$ & $0.5$ & $0.5$ & $0.5$ \\
&& $B$ Normalization & $\ell_2$ & {\,$\ell_2$\,} & $\ell_2$ & {\,$\ell_2$\,} \\
&& \makecell[l]{Orthonormal Loss Coeff.} & $1$ & $1$ & $1$ & $1$ \\ \cline{2-7}
&\multirow{3}{*}{\makecell[c]{Optimizer (Adam)\\ \citep{kingma15adam}}} 
& Learning Rate (F, B)  & ($10^{-4}$, $10^{-4}$) & ($10^{-4}$, $10^{-4}$) & ($10^{-4}$, $10^{-4}$) & ($10^{-4}$, $10^{-6}$) \\
&& Learning Rate ($\pi$)  & $10^{-4}$ & $10^{-4}$ & $10^{-4}$ & $10^{-6}$ \\
&& Target Network EMA & $0.99$ & $0.99$ & $0.99$ & $0.99$ \\
\midrule\midrule
\multirow{3}{*}{HILP} & \multirow{1}{*}{\makecell[c]{Hilbert Representations\\ \citep{park2024foundation}}}  
& Embedding Dimension $d$ & $50$ & $50$ & $50$ & $100$ \\
&& Feature Learning Expectile  &$0.5$ &$0.5$ &$0.5$ & $0.5$ \\
&& Feature Learning Discount Factor  & $0.98$ &$0.98$ &$0.98$ &$0.98$ \\
&&Successor feature loss & Q-loss &Q-loss &Q-loss & Q-loss\\
\cline{2-7}
& \multirow{2}{*}{\makecell[c]{Optimizer (Adam)}} & Learning Rate (SF, F)  & ($10^{-4}$, $10^{-5}$) & ($10^{-4}$, $10^{-4}$) & ($10^{-4}$, $10^{-4}$) & ($10^{-4}$, $10^{-4}$) \\
&& Learning Rate ($\pi$)  & $10^{-4}$ & $10^{-4}$ & $10^{-4}$ & $10^{-4}$ \\ 
&& Target Network EMA features & $0.995$ & $0.995$ & $0.995$ & $0.995$ \\
&& Target Network EMA SF & $0.99$ & $0.99$ & $0.99$ & $0.99$ \\
\midrule\midrule
\multirow{7}{*}{PSM} &\multirow{5}{*}{\makecell[c]{Proto Successor Measures\\ \citep{agarwal2024proto}}} &
Embedding Dimension $d$ & $100$ & $50$ & $50$ & $100$ \\
&& Policy Codebook Size  & $2^{16}$ & $2^{16}$ & $2^{16}$ & $2^{16}$ \\
&& Feature Learning Timesteps& $400k$ & $400k$ & $400k$ & $400k$ \\
&& Embedding Prior Goal Prob.& $0.5$ & $0.5$ & $0.5$ & $0.5$ \\
&& $B$ Normalization & $\ell_2$ & {\,$\ell_2$\,} & $\ell_2$ & {\,$\ell_2$\,} \\
&& \makecell[l]{Orthonormal Loss Coeff.} & $1$ & $1$ & $1$ & $1$ \\ \cline{2-7}
&\multirow{3}{*}{\makecell[c]{Optimizer (Adam)\\ \citep{kingma15adam}}} 
& Learning Rate (F, B)  & ($10^{-4}$, $10^{-4}$) & ($10^{-4}$, $10^{-4}$) & ($10^{-4}$, $10^{-4}$) & ($10^{-4}$, $10^{-6}$) \\
&& Learning Rate ($\pi$)  & $10^{-4}$ & $10^{-4}$ & $10^{-4}$ & $10^{-6}$ \\
&& Target Network EMA & $0.99$ & $0.99$ & $0.99$ & $0.99$ \\ \midrule\midrule
& \multirow{3}{*}{\makecell[c]{Policy (TD3)\\ \citep{fujimoto2018addressing}}} & Target Policy Noise & $\mathcal{N}(0, 0.2)$ & $\mathcal{N}(0, 0.2)$ & $\mathcal{N}(0, 0.2)$ & $\mathcal{N}(0, 0.2)$ \\
& & Target Policy Clipping & $0.3$ & $0.3$ & $0.3$ & $0.3$ \\
& & Policy Update Frequency & $1$ & $1$ & $1$ & $1$ \\ \midrule
& \multirow{5}{*}{\makecell[c]{\hfill Common \hfill}} & Batch Size     & {$1024$} & $1024$ & {$1024$} & $1024$ \\
&& Gradient Steps & $3$M & $3$M & $3$M & {$3$M} \\
&& Discount Factor $\gamma$ & $0.98$ & $0.98$ & $0.98$ & $0.99$ \\
&& \makecell[l]{Reward Inference Samples} & {$250,000$}& {$250,000$}& {$250,000$}& {$250,000$}\\
&& \makecell[l]{ExoRL number of trajectories} & $5,000$& $5,000$& $5,000$& $5,000$\\
\bottomrule
\end{tabular}}
\end{table}

\paragraph{FB architecture.}
The backward representation network $B(s)$ is represented by a feedforward neural network with two hidden layers, each with $256$ units, that takes as input a state and outputs a $d$-dimensional embedding. 
For the forward network $F(s,a,z)$, we first preprocess separately $(s,a)$ and $(s,z)$ by two
feedforward networks with one single hidden layer (with $1024$ units) to $512$-dimentional space.
Then we concatenate their two outputs and pass it into two heads of feedforward networks (each
with one hidden layer of $1024$ units) to output a $d$-dimensional vector.
For the policy network $\pi(s,z)$, we first preprocess separately $s$ and $(s,z)$ by two feedforward
networks with one single hidden layer (with $1024$ units) to $512$-dimentional space. Then we
concatenate their two outputs and pass it into another one single hidden layer feedforward network
(with $1024$ units) to output to output a $d_A$-dimensional vector, then we apply a \texttt{Tanh} activation
as the action space is $[-1,1]^{d_A}$.

For all the architectures, we apply a layer normalization and \texttt{Tanh} activation in the
first layer in order to standardize the states and actions. We use \texttt{Relu} for the rest of layers. We also
pre-normalize $z:z \leftarrow \sqrt{d} \frac{z}{\|z\|_2}$ in the input of $F$, and $\pi$.

\paragraph{HILP architecture.}
We use the same policy architecture as FB as well F-architecture for the successor features. We learn the HILP features using a 2 layers MLP with hidden dimension 1024. Even in this case, $z$ is normalized.

\paragraph{PSM architecture.}

We use the same policy architecture as FB as well F-architecture for the successor features. We learn the PSM features using a 2 layers MLP with hidden dimension 256. Even in this case, $z$ is normalized.

\subsection{Algorithm Implementation}

All the actor-critic algorithms are implemented using TD3~\citep{fujimoto2018addressing} as the base offpolicy algorithm. When learning from scratch we use a 2 two layers MLP with hidden dimension $1024$ and \texttt{Relu} activation. We use the same configuration also for the critic.

For \RELA, we use a small 2 layer MLPs with 64 hidden dimensions and ReLU activation as residual network. In the ablation, when residual critic is deactivated, we use the same critic network as for standard TD3.

For \LOLA, we use a Gaussian policy centered around the learned $z$ and learn simultaneously mean and standard deviation.

\subsection{Hyperparameters}

For all baselines and our method, we run a hyperparameter sweep across domains and tasks and choose the configuration that performs the best across tasks for each BFM. %
\paragraph{TD3-based algorithms.}
We run a hyperparameter sweep on Update to Data ratio (UTD) in $[1,4,8]$, actor update in frequency in $[1,4]$. We use a small  $2$ layer MLP with 64 hidden nodes for the residual network which we found to work best for fast adaptation. When not using residual critic, we learn a critic from scratch using a  $2$ layer MLP with 1024 hidden nodes. We use $10^{-4}$ as learning rate for both critic and actor. We use either warm start of $0$ steps or $5000$ steps.

\paragraph{\LOLA.} We consider hyperparameter sweep between a lookahead horizons of $[50,100,250]$, the number of total trajectories per update to be $10$, and number of trajectories for a sampled state to be $5$ (for calculating baseline). We sweep between $[0, 0.2,0.5]$ for the probability of resetting to initial state distribution and otherwise sampling from states encountered in replay buffer. We sweep also the learning rate in $[0.1, 0.05]$.

\section{Additional Experiments}\label{app:additional.exp}

As mentioned in the main paper, \texttt{pointmass} is the domain where actor-critic algorithms incurs a significant initial drop. Figure~\ref{fig:main_result_nopointmass} shows the average performance improvement without the \texttt{pointmass} domain. As we can see, \RELA{} has still a initial drop but it is much more reduced compared to what reported in the main. Previous papers~\citep[e.g.][]{pirotta2024fast}, noticed that a smaller learning rate helped in \texttt{pointmass}. In our experiments we kept the learning rate fixed at $10^{-4}$ for all the domains, it would be interesting to test different values.

\begin{figure}[h]
    \centering
    \includegraphics[width=1.0\linewidth]{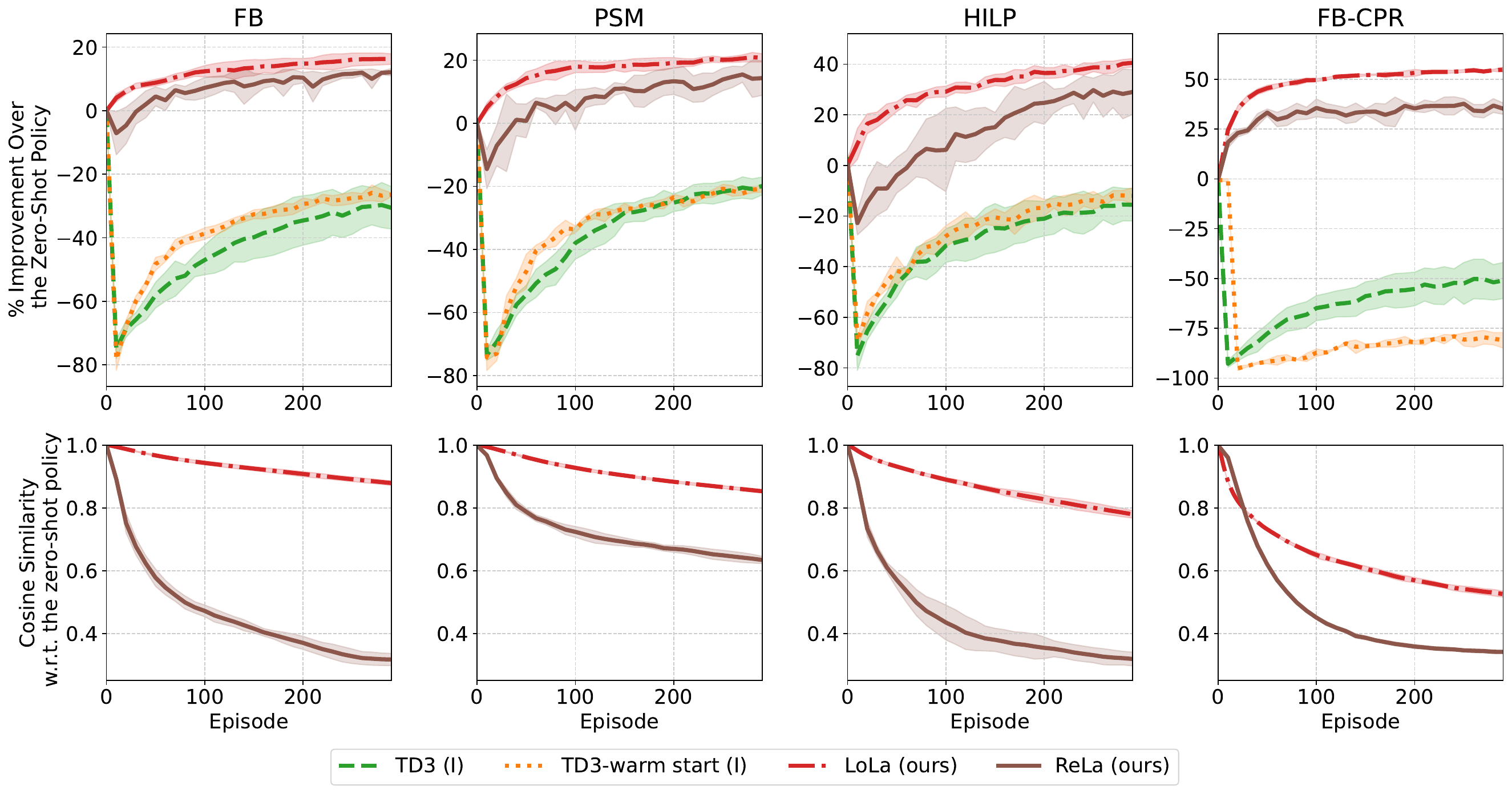}
    \caption{\textbf{Top}: Performance improvement w.r.t.\ the zero-shot policy for different online fast adaptation methods and BFMs without the \texttt{Pointmass} domain.}
    \label{fig:main_result_nopointmass}
\end{figure}

\subsection{Per Algorithm Per Domain Ablation Studies}

We conduct extensive ablation studies to understand the impact of key design choices in our methods, specifically: 
(1) zero-shot initialization in LoLA and ReLA variants, (2) value function bootstrapping in LoLA, (3) residual critics in ReLA variants and action-based TD3 with warm start. Table~\ref{tab:appendix.algorithms} provides a comprehensive list of the algorithm variants considered.

\begin{table}[h]
    \centering
    \resizebox{.96\textwidth}{!}{
    \begin{tabular}{ccccccc}
    \toprule
    Algorithm & \makecell{Zero-Shot Policy\\ Initialization} & \makecell{Residual Critic ($^\dagger$) or\\  Bootstrapped Return ($^+$)}  & \makecell{Critic Trained\\ from scratch} & \makecell{Search space} & WSRL\\
    \midrule
    \LOLA{} & \checkmark& \checkmark ($^+$) & & $z$ & & \multirow{4}{*}{actor-only}\\
    \LOLA{} (no-I) & & \checkmark ($^+$) & & $z$ \\
    \LOLA{} (no-R) & \checkmark& &\checkmark & $z$ \\
    \LOLA{} (no-I, no-R) & & &\checkmark & $z$ \\
    \midrule
    \RELA{} & \checkmark& \checkmark ($^\dagger$) & & $z$&& \multirow{4}{*}{actor-critic} \\
    \RELA{}-warm-start & \checkmark& \checkmark ($^\dagger$) & & $z$& \checkmark \\
    \RELA{} (no-I) & & \checkmark ($^\dagger$) & & $z$ \\
    \RELA{} (no-R) & \checkmark& &\checkmark & $z$ \\
    \RELA{}-warm-start (no-R) &  \checkmark& &\checkmark & $z$& \checkmark \\
    \RELA{} (no-I, no-R) & & &\checkmark & $z$ \\
    \midrule
    \RELA{}-a & \checkmark& \checkmark ($^\dagger$) & & $a$&& \multirow{4}{*}{actor-critic} \\
    \RELA{}-a (no-I) & & \checkmark ($^\dagger$) & & $a$ \\
    \RELA{}-a (no-R) & \checkmark& &\checkmark & $a$ \\
    \RELA{}-a (no-I, no-R) & & &\checkmark & $a$ \\
    \midrule
    TD3-z & & & \checkmark & $z$&& \multirow{4}{*}{actor-critic} \\
    TD3 (I) & \checkmark & & \checkmark & $a$ \\
    \makecell{TD3-warm-start (I)\\ (i.e., using WSRL)} & \checkmark & & \checkmark & $a$& \checkmark  \\
    \makecell{TD3-warm-start (I, R)} & \checkmark & \checkmark ($^\dagger$) & & $a$& \checkmark  \\
    \bottomrule
    \end{tabular}
    }
    \caption{Summary of the algorithm variations considered in the main paper. Search space $z$ means latent policy adaptation leveraging the policy space $\{\pi_z\}$ constructed by the BFM. Search space $a$ denotes fine-tuning in action space (i.e., updating all policy
    network parameters).}
    \label{tab:appendix.algorithms}
\end{table}

We evaluated these variants across four DMC domains (Quadruped, Pointmass, Cheetah, Walker) using FB, HILP and PSM, and on HumEnv using FB-CPR, each experiment conducted over five random seeds. The results are shown in Figure~\ref{fig:hilp_ablation}, ~\ref{fig:fb_ablation}, ~\ref{fig:psm_ablation} and ~\ref{fig:fbcpr_ablation}.

\textbf{Zero-Shot Initialization (no-zs-init):} Removing zero-shot initialization consistently degraded early-stage performance across all methods and domains, with the only exception being ReLA-a with FB and PSM on pointmass. The benefit of zero-shot initialization is especially significant on LoLA.

\textbf{Bootstrapping (no-bootstrap):} We hypothesized that value functiom bootstrapping could help stabilizing LoLA. However, we did not notice such benefit from our ablation experiments. 

\textbf{Residual critics (no-residual):} Removing residual critics in ReLA variants and TD3-based algorithm strongly impaired the effectiveness of the algorithm. This effect was especially pronounced for ReLA-a and TD3 on DMC domains.

\begin{figure}[htbp]
    \centering
    \includegraphics[width=0.99\textwidth]{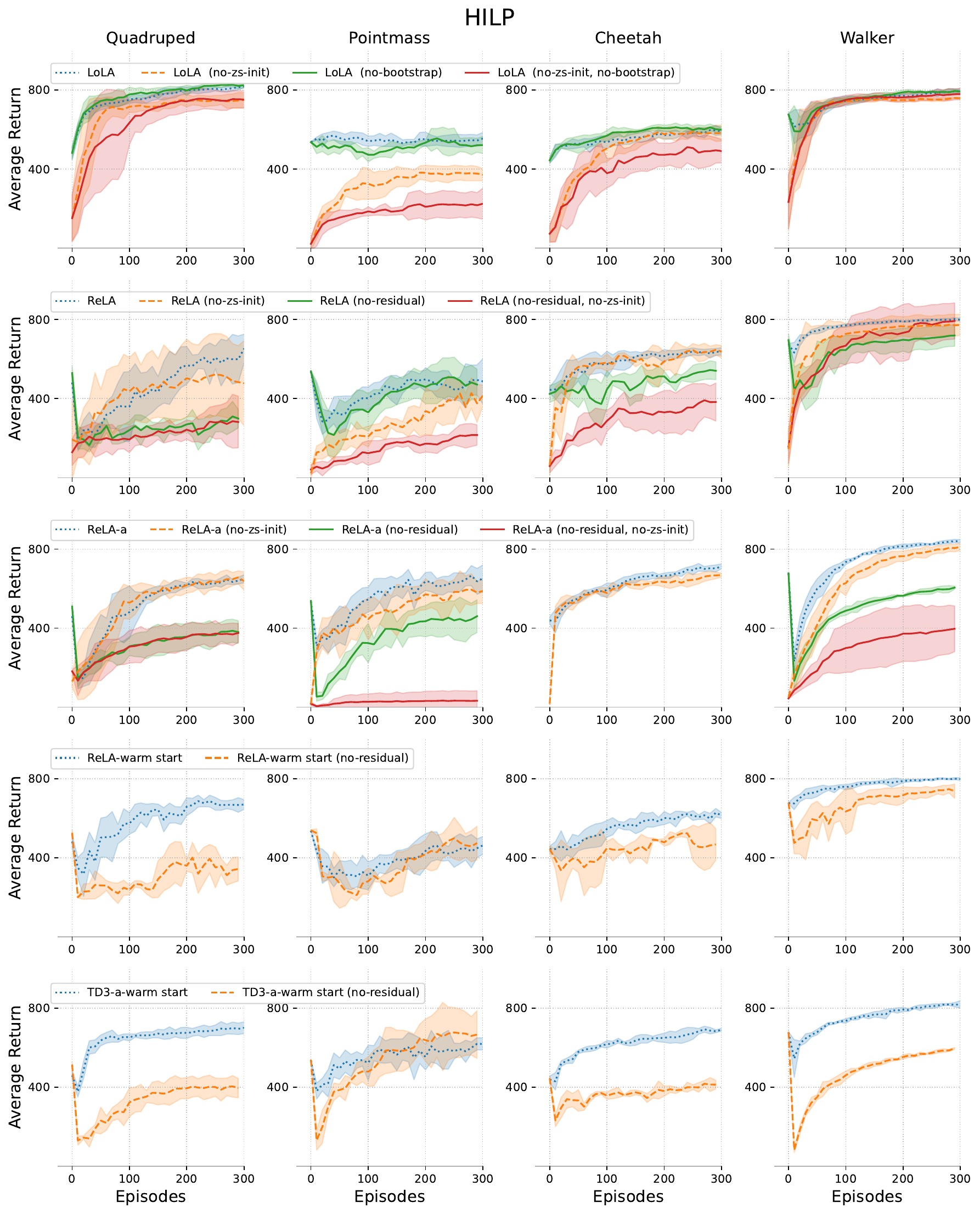} %
    \caption{Ablation studies evaluating adaptations of HILP on four DMC tasks (Quadruped, Pointmass, Cheetah, Walker). Experiments include disabling zero-shot initialization ("no-I") and/or removing residual critics ("no-R") from LoLA, ReLA, and three additional variants: (1) ReLA-a: update a instead of z in ReLA, (2) ReLA with warm start (ReLA-warm start), and (3) action-based TD3 with warm start (TD3-warm start). Shaded areas represent standard errors across 5 seeds. }
    \label{fig:hilp_ablation}
\end{figure}

\begin{figure}[htbp]
    \centering
    \includegraphics[width=0.99\textwidth]{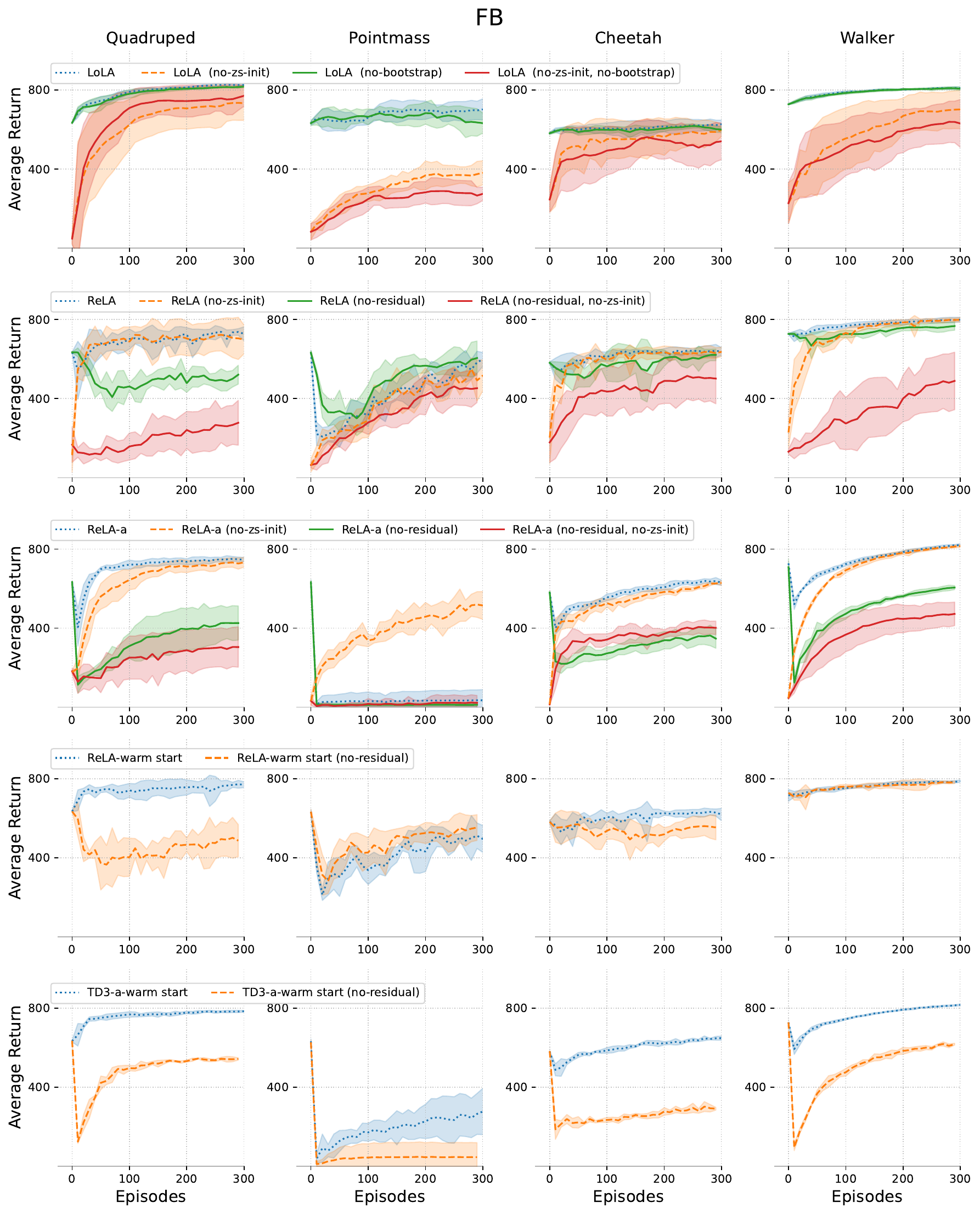} %
    \caption{Ablation studies evaluating adaptations of FB on four DMC tasks (Quadruped, Point-mass, Cheetah, Walker). Experiments include disabling zero-shot initialization ("no-I") and/or re-moving residual critics ("no-R") from LoLA, ReLA, and three additional variants: (1) ReLA-a: update a instead of z in ReLA, (2) ReLA with warm start (ReLA-warm start), and (3) action-based
TD3 with warm start (TD3-warm start). Shaded areas represent standard errors across 5 seeds.}
    \label{fig:fb_ablation}
\end{figure}

\begin{figure}[htbp]
    \centering
    \includegraphics[width=0.99\textwidth]{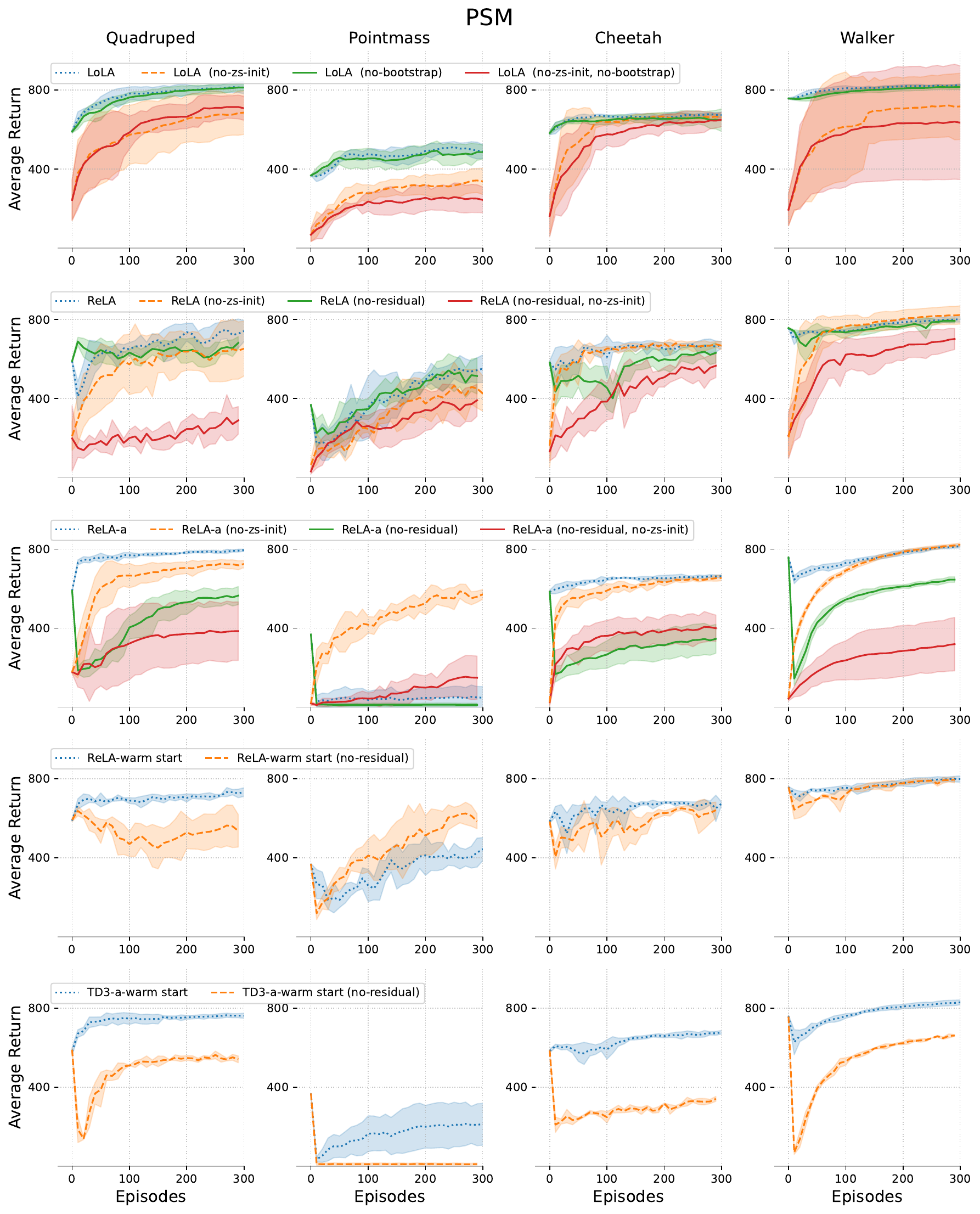} %
    \caption{Ablation studies evaluating adaptations of PSM on four DMC tasks (Quadruped, Point-
mass, Cheetah, Walker). Experiments include disabling zero-shot initialization ("no-I") and/or re-
moving residual critics ("no-R") from LoLA, ReLA, and three additional variants: (1) ReLA-a:
update action instead of z in ReLA, (2) ReLA with warm start (ReLA-warm start), and (3) action-based
TD3 with warm start (TD3-warm start). Shaded areas represent standard errors across 5 seeds.}
    \label{fig:psm_ablation}
\end{figure}

\begin{figure}[htbp]
    \centering
    \includegraphics[width=0.99\textwidth]{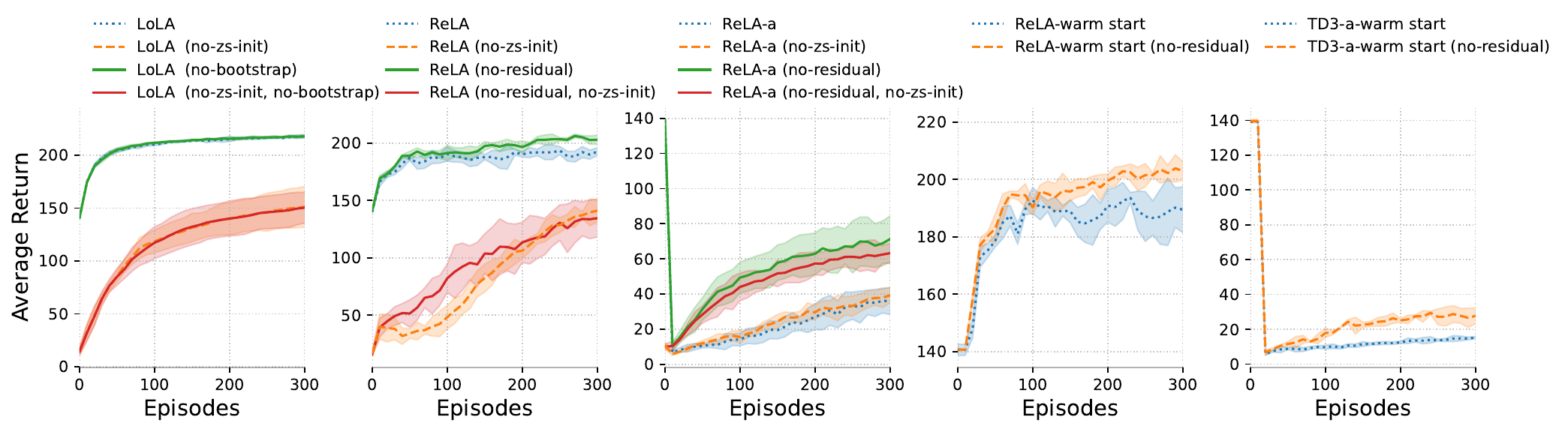} %
    \caption{Ablation studies for adaptation with FB-CPR on 45 HumEnv tasks. Experiments include disabling zero-shot initialization ("no-I") and/or re-moving residual critics ("no-R") from LoLA, ReLA, and three additional variants: (1) ReLA-a: update action instead of z in ReLA, (2) ReLA with warm start (ReLA-warm start), and (3) action-based
TD3 with warm start (TD3-warm start). Shaded areas represent standard errors across 5 seeds.}
    \label{fig:fbcpr_ablation}
\end{figure}

\end{document}